    \tikzstyle{node}=[circle, draw=black, fill=white, inner sep=0pt, minimum width=1.7ex]
    \tikzstyle{cnode}=[circle, draw=black, fill=black!20, inner sep=0pt, minimum width=1.7ex]
    \tikzstyle{edge}=[draw=black,>=latex,->]
    \newtheorem{definition}{Definition}
    \newtheorem{lemma}{Lemma}
    \newtheorem*{proof}{Proof}
\newcommand{\eqnum}{\refstepcounter{equation}\textup{(\theequation)}}
\newcommand{\vi}{\mathrm{VI}}
\newcommand{\ri}{\mathrm{RI}}
\newcommand{\kl}{\mathrm{KL}}
\newcommand{\argmax}[1]{\underset{#1}{\mathrm{argmax}}}
\newcommand{\argmin}[1]{\underset{#1}{\mathrm{argmin}}}
\title{Estimating Maximally Probable Constrained Relations by Mathematical Programming}
\author{
Lizhen Qu\\
Max Planck Institute for Informatics\\
Saarbr\"ucken, Germany \\
\texttt{lqu@mpi-inf.mpg.de} \\
\And
Bjoern Andres\\
Max Planck Institute for Informatics\\
Saarbr\"ucken, Germany \\
\texttt{andres@mpi-inf.mpg.de}
}
\begin{document}

\maketitle

\begin{abstract}
Estimating a constrained relation is a fundamental problem in machine learning.
Special cases are
\emph{classification} (the problem of estimating a map from a set of to-be-classified elements to a set of labels),
\emph{clustering} (the problem of estimating an equivalence relation on a set) 
and \emph{ranking} (the problem of estimating a linear order on a set).
We contribute a family of probability measures on the set of all relations between two finite, non-empty sets,
which offers a joint abstraction of multi-label classification, correlation clustering and ranking by linear ordering.
Estimating (learning) a maximally probable measure, given (a training set of) related and unrelated pairs, is a convex optimization problem.
Estimating (inferring) a maximally probable relation, given a measure, is a $01$-linear program.
It is solved in linear time for maps.
It is NP-hard for equivalence relations and linear orders.
Practical solutions for all three cases are shown in experiments with real data.
Finally, estimating a maximally probable measure and relation jointly is posed as a mixed-integer nonlinear program.
This formulation suggests a mathematical programming approach to semi-supervised learning.
%
\end{abstract}

\section{Introduction}
\enlargethispage{2ex} 
Given finite, non-empty sets, $A$ and $B$, equal or unequal, 
the problem of estimating a relation between $A$ and $B$ is to decide, 
for every $a \in A$ and every $b \in B$, 
whether or not the pair $ab$ is related.
\emph{Classification}, for instance, is the problem of estimating a map from a set $A$ of to-be-classified elements to a set $B$ of labels by choosing, for every $a \in A$, precisely one label $b \in B$.
\emph{Clustering} is the problem of estimating an equivalence relation on a set $A$ 
by deciding, for every $a,a' \in A$, whether or not $a$ and $a'$ are in the same cluster.
\emph{Ranking} is the problem of estimating a linear order on a set $A$ 
by deciding, for every $a,a' \in A$, whether or not $a$ is less than or equal to $a'$.
In none of these three examples are the decisions pairwise independent:
If the label of $a \in A$ is $b \in B$, it cannot be $b' \in B \setminus \{b\}$.
If $a$ and $a'$ are in the same cluster, and $a'$ and $a''$ are in the same cluster, $a$ and $a''$ cannot be in distinct clusters. 
If $a$ is less than $a'$, $a'$ cannot be less than $a$, etc.
Constraining the set of feasible relations to maps, equivalence relations and linear orders, resp., introduces dependencies.

We define a family of probability measures on the set of all relations between two finite, non-empty sets
such that the relatedness of any pair $ab$ and the relatedness of any pair $a'b' \not= ab$ are independent, albeit with the possibility of being conditionally dependent, given a constrained set of feasible relations.
With respect to this family of probability measures,
we study the problem of estimating (learning) a maximally probable measure, 
given (a training set of) related and unrelated pairs, 
as well as the problem of estimating (inferring) a maximally probable relation, given a measure.
Solutions for classification, clustering and ranking are shown in experiments with real data.
Finally, we state the problem of estimating the measure and relation jointly, for any constrained set of feasible relations, as a mixed-integer nonlinear programming problem (MINLP).
This formulation suggests a mathematical programming approach to semi-supervised learning.
Proofs are deferred to 
Appendix~\ref{section:appendix:proofs}.

\section{Related Work}
\label{section:related-work}

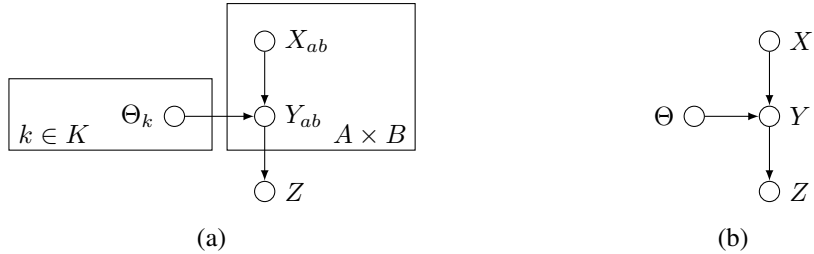
\begin{figure}
\begin{minipage}[b]{0.49\linewidth}
    \centering
    \begin{tikzpicture}
        \node[style=node,label=right:$X_{ab}$] (f) at (1, 2) {};
        \node[style=node,label=right:$Y_{ab}$] (x) at (1, 1) {};
        \node[style=node,label=right:$Z$] (s) at (1, 0) {};
        \node[style=node,label=left:$\Theta_k$] (t) at (-0.2, 1) {};
        \draw (0.5, 0.55) rectangle (3, 2.5);
        \draw (-2.4, 0.55) rectangle (0.3, 1.5);
        \node at (2.4, 0.75) {$A \times B$};
        \node at (-1.8, 0.76) {$k \in K$};
        \path[style=edge] (t)--(x) {};
        \path[style=edge] (f)--(x) {};
        \path[style=edge] (x)--(s) {};
    \end{tikzpicture}\\
    \subcaption{}
    \label{figure:bayesian-model-a}
\end{minipage}
\begin{minipage}[b]{0.49\linewidth}
    \centering
    \begin{tikzpicture}
        \node[style=node,label=right:$X$] (x) at (1, 2) {};
        \node[style=node,label=right:$Y$] (y) at (1, 1) {};
        \node[style=node,label=right:$Z$] (z) at (1, 0) {};
        \node[style=node,label=left:$\Theta$] (t) at (0, 1) {};
        \path[style=edge] (x)--(y) {};
        \path[style=edge] (y)--(z) {};
        \path[style=edge] (t)--(y) {};
    \end{tikzpicture}\\
    \subcaption{}
    \label{figure:bayesian-model-b}
\end{minipage}
\caption{Bayesian models of probability measures. 
a) The model of probability measures on binary relations we consider, in plate notation. 
b) A more general model of probability measures on subsets.}
\label{figure:bayesian-model}
\end{figure}

Estimating a constrained relation is a special case of \emph{structured output prediction} 
\cite{bakir-2007},
the problem of estimating, for a set $S$
and a set $z \subseteq 2^S$ of feasible subsets of $S$,
from observed data $x$,
one feasible subset $y \in z$ of $S$
so as to maximize a margin,
as in 
\cite{tsochantaridis-2005},
entropy, as in 
\cite{lange-2005},
or a conditional probability of $y$, given $x$ and $z$,
as in this work.
For relations, $S = A \times B$ with $A \not= \emptyset$ and $B \not= \emptyset$.

The Bayesian model of probability measures on the set of all relations between two sets we consider
(Fig.~\ref{figure:bayesian-model-a})
is more specific than the general Bayesian model for structured output prediction
(Fig.~\ref{figure:bayesian-model-b}).
Firstly, we assume that the relatedness of a pair $ab$ depends only on one observation, $x_{ab}$, associated with the pair $ab$. We consider no observations associated with multiple pairs.
Secondly, we assume that the relatedness of any pair $ab$ and the relatedness of any pair $a'b' \not= ab$ are independent, albeit with the possibility of being conditionally dependent, given a constrained set $z$ of feasible relations.

One probabilistically principled way of estimating a constrained subset (such as a relation) from observed data is by maximizing entropy 
\cite{lange-2005}.
The maximum probability estimation we perform is different.
It is invariant under transformations of the probability measure that preserve the optimum (possibly a disadvantage), and it does not require sampling.

The problem of estimating a maximally probable \emph{equivalence relation} with respect to the probability measure we consider is known in discrete mathematics as the Set Partition Problem
\cite{chopra-1993}
and in machine learning as correlation clustering
\cite{bansal-2004,demaine-2006}.
The state of the art in solving this NP-hard problem is by branch-and-cut, 
exploiting properties of the Set Partition Polytope
\cite{deza-1997}.
Correlation clustering differs from clustering based on (non-negative) distances.
In correlation clustering, all partitions are, a priori, equally probable.
In distance-based clustering, the prior probability of partitions is typically different from the equipartition (otherwise, the trivial solution of one-elementary clusters would be optimal).
Parameter learning for distance-based clustering is discussed comprehensively in
\cite{xing-2003}.
We discuss parameter learning for equivalence relations and thus, correlation clustering.
Closely related to equivalence relations are multicuts
\cite{chopra-1993};
for a complete graph, the complements of the multicuts are the equivalence relations on the node set.
Multicuts are used, for instance, in image segmentation
\cite{andres-2011,andres-2012,kappes-2013,kim-2014}.
The probability measure on a set of multicuts defined in
\cite{andres-2012}
is a special case of the probability measure we discuss here.

The problem of estimating a maximally probable \emph{linear order} with respect to the probability measure we consider is known as the Linear Ordering Problem.
The state of the art in solving this NP-hard problem is by branch-and-cut, 
exploiting properties of the Linear Ordering Polytope.
The problem and polytope are discussed comprehensively in 
\cite{marti-2011},
along with exact algorithms, approximations and heuristics.
Solutions of the Linear Ordering problem are of interest in machine learning, 
for instance, to predict the order of words in sentences
\cite{tromble-2009}.
Our experiments in 
Section~\ref{section:exp:order}
are inspired by the experiments in
\cite{tromble-2009}.
Unlike in \cite{tromble-2009},
we do not use any linguistic features and assess solutions of the Linear Ordering Problem explicitly.

We concentrate on feature vectors in $\{0,1\}^K$, for a finite index set $K$.
This is w.l.o.g.~on a finite state computer
and has the advantage that every probability measure on the feature space has a (unique) multi-linear polynomial form
\cite{boros-2002}.
We approximate this form by randomized multi-linear polynomial lifting,
building on the approximation of polynomial kernels proposed in
\cite{pham2013fast}.
This modeling of probability measures by \emph{linear} approximations of multi-\emph{linear} polynomial forms
is in stark contrast to the families of \emph{nonlinear}, nonconvex functions modeled by (deep) neural networks.

\newpage
\section{Probability Measures on a Set of Binary Relations}

For any finite, non-empty sets, $A$ and $B$, equal or unequal,
we define the probability of any relation $y' \in 2^{A \times B}$ 
between these sets with respect to 
(i) a set $z' \subseteq 2^{A \times B}$ called the set of \emph{feasible relations}
(ii) a finite index set $J$ and, for every $ab \in A \times B$, 
an $x_{ab} \in \{0,1\}^J$ called the \emph{feature vector} of the pair $ab$
(iii) a finite index set $K$ and a $\theta \in \mathbb{R}^K$ called a
\emph{parameter vector}.
The probability measure is defined with respect to a Bayesian model in four random variables, 
$X$, $Y$, $Z$, and $\Theta$.
The model is depicted in 
Fig.~\ref{figure:bayesian-model-a}.
The random variables and conditional probability measures are defined below.
\begin{itemize}
\item For any $ab \in A \times B$,
a realization of the random variable $X_{ab}$ is
a (feature) vector $x_{ab} \in \{0,1\}^J$.
Thus, a realization of the random variable $X$ is a map
$x: A \times B \to \{0,1\}^J$
from pairs $ab$ to their respective feature vector $x_{ab}$.
\item For any $ab \in A \times B$,
a realization of the random variable $Y_{ab}$ is a $y_{ab} \in \{0,1\}$.
Hence, a realization of the random variable $Y$ is the characteristic vector
$y \in \{0,1\}^{A \times B}$ of a relation between $A$ and $B$, 
namely the relation $y' := \{ab \in A \times B\ |\ y_{ab} = 1\}$.
\item A realization of the random variable $Z$ is a set 
$z \subseteq \{0,1\}^{A \times B}$ of characteristic vectors.
It defines a set $z' \subseteq 2^{A \times B}$ of feasible relations,
namely those relations $y'$ whose characteristic vector $y$ is an element of $z$.
\item A realization of the random variable $\Theta$ is a (parameter) vector $\theta \in \mathbb{R}^K$.
\end{itemize}
From the conditional independence assumptions enforced by the Bayesian model
(Fig.~\ref{figure:bayesian-model-a})
follows that a probability measure of the conditional probability of a relation $y'$ and model parameters $\theta$, given features $x$ of all pairs, and given a set $z'$ of feasible relations, separates according to
\begin{align}
\mathrm{d}p_{Y, \Theta | X, Z}(y, \theta, x, z)
\ \propto \ 
p_{Z|Y}(z, y)
    \prod_{ab \in A \times B} \!\!\!\! p_{Y_{ab} | X_{ab}, \Theta}(y_{ab}, x_{ab}, \theta) 
    \cdot \prod_{k \in K} p_{\Theta_k}(\theta_k)\ 
    \mathrm{d}\theta_k
\enspace .
\label{eq:prob-model}
\end{align}

We define the likelihood $p_{Z|Y}$ of a relation $y'$ to be positive and equal for all feasible relations and zero for all infeasible relations. That is,
\begin{align}
p_{Z|Y}(z, y)
    & \propto \begin{cases}
        1 & \textnormal{if}\ y \in z \\
        0 & \textnormal{otherwise}
    \end{cases}
    \enspace .
    \label{eq:likelihood}
\end{align}

By defining the likelihood $p_{Y_{ab} | X_{ab}, \Theta}$, we choose a family of measures of the probability of a pair being an element of the unconstrained relation, given its features. 
By defining the prior $p_{\Theta_k}$, we choose a distribution of the parameters of this family.
We consider two alternatives, a logistic model and a Bernoulli model, each with respect to a single (regularization) parameter $\sigma \in \mathbb{R}^+$.

\hfill
\begin{tabular}{@{}llll@{\hspace{8ex}}r@{}}
    & $p_{Y_{ab} | X_{ab}, \Theta}(y_{ab}, x_{ab}, \theta)$
    & $p_{\Theta_k}(\theta_k)$\\[2ex]
Logistic
    & $\displaystyle\left( 1 + 2^{-(2 y_{ab} - 1)\langle \theta, x_{ab} \rangle} \right)^{-1}$
    & $\displaystyle\frac{1}{\sigma \sqrt{2 \pi}} \exp\left(- \frac{\theta_k^2}{2 \sigma^2} \right)$
    & $\theta \in \mathbb{R}^K$
    & \eqnum\label{eq:logistic}\\[3ex]
Bernoulli
    & $\displaystyle\prod_{k \in K} \left( \theta_k^{y_{ab}} (1 - \theta_k)^{1 - y_{ab}} \right)^{{x_{ab}}_k}$
    & $\displaystyle \frac{\Gamma(2\sigma)}{\Gamma^2(\sigma)} \theta_k^{\sigma - 1} (1-\theta_k)^{\sigma - 1}$
    & $\theta \in (0,1)^K$
    & \eqnum\label{eq:bernoulli}
\end{tabular}

Our assumption of a \emph{linear} logistic form is without loss of generality, as we show in 
Appendix~\ref{section:multilinear-polynomial-lifting}.
The Bernoulli model is defined for the special case in which each pair $ab$ is an element of one of finitely many classes, characterized by precisely one non-zero entry of the feature vector $x_{ab}$, and the probability of the pair being an element of the unconstrained relation depends only on its class.

\paragraph{Constraints as Evidence}
Any property of finite relations can be enforced by introducing \emph{evidence},
more precisely, by fixing the random variable $Z$ to the proper subset $z \subset \{0,1\}^{A \times B}$ of precisely the characteristic vectors of those relations $z' \subset 2^{A \times B}$ that exhibit the property.
Two examples are given below.
Firstly, the property that a particular pair $ab \in A \times B$ be an element of the relation and that a different pair $a'b' \in A \times B$ not be an element of the relation is introduced by $z$ defined as the set of all $x \in \{0,1\}^{A \times B}$ such that $x_{ab} = 1$ and $x_{a'b'} = 0$.
Secondly, the property that the relation be a \emph{map} from $A$ to $B$
is introduced by $z$ defined as the set of all $x \in \{0,1\}^{A \times B}$ such that $\forall a \in A: \sum_{b \in B} x_{ab} = 1$.
More examples are given in
Section~\ref{section:special-cases}.

\section{Maximum Probability Estimation}
\subsection{Logistic Model}
\begin{lemma}
\label{lemma:logistic}
$(\hat\theta, \hat y)$ maximizes $p_{Y, \Theta | X, Z}$ defined by 
\eqref{eq:prob-model}, \eqref{eq:likelihood} and \eqref{eq:logistic}
if and only if
$(\hat\theta, \hat y)$ is a solution of the mixed-integer nonlinear program written below. 
Its continuous relaxation need not be convex.
\begin{align}
\min_{y \in z, \theta \in \mathbb{R}^K}\ 
D_x(\theta, y) + R_\sigma(\theta)
\label{eq:main-problem-logistic}
\end{align}
\vspace{-2ex}
\begin{align}
D_x(\theta, y) & = 
\sum_{ab \in A \times B} \left(
    - \langle \theta, x_{ab} \rangle y_{ab}
    + \log_2 \left(
        1 + 2^{\langle \theta, x_{ab} \rangle}
    \right)
\right) 
\label{eq:main-problem-logistic-d}\\
R_\sigma(\theta) & = \frac{\log_2 e}{2 \sigma^2} \| \theta \|_2^2
\label{eq:main-problem-logistic-r}
\end{align}
\end{lemma}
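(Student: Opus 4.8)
The plan is to show that maximizing the posterior density is equivalent to minimizing its negative logarithm, and that this negative logarithm equals the objective $D_x + R_\sigma$ up to an additive and a multiplicative constant that do not affect the minimizer. Since $-\log_2$ is strictly decreasing and the density in \eqref{eq:prob-model} is non-negative, a pair $(\hat\theta, \hat y)$ maximizes $p_{Y,\Theta|X,Z}$ if and only if it minimizes $-\log_2 p_{Y,\Theta|X,Z}$ over the support of the density. The factor $p_{Z|Y}$ from \eqref{eq:likelihood} vanishes unless $y \in z$, so any infeasible $y$ has zero posterior probability and is never a maximizer; restricting attention to $y \in z$ therefore reproduces exactly the constraint in \eqref{eq:main-problem-logistic}, while on this feasible set $p_{Z|Y}$ contributes only a positive constant.

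Next I would take the logarithm of the product in \eqref{eq:prob-model}, which separates it into a sum over pairs $ab$ and a sum over indices $k$. The prior term is straightforward: substituting the Gaussian density from \eqref{eq:logistic} gives
\[
-\sum_{k \in K} \log_2 p_{\Theta_k}(\theta_k)
= \frac{\log_2 e}{2\sigma^2}\sum_{k \in K}\theta_k^2 + \text{const}
= R_\sigma(\theta) + \text{const},
\]
since the normalization $(\sigma\sqrt{2\pi})^{-1}$ is independent of $\theta$ and $\log_2 \exp(u) = u\,\log_2 e$. The crux of the argument is the per-pair data term. I would establish the identity
\[
\log_2\!\left(1 + 2^{-(2y_{ab}-1)\langle\theta, x_{ab}\rangle}\right)
= -\langle\theta, x_{ab}\rangle\, y_{ab} + \log_2\!\left(1 + 2^{\langle\theta, x_{ab}\rangle}\right)
\]
for every $y_{ab} \in \{0,1\}$, which is exactly the summand of $D_x$ in \eqref{eq:main-problem-logistic-d}. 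This is the main (though elementary) obstacle, because the left-hand side is symmetric in $\pm\langle\theta,x_{ab}\rangle$ whereas the right-hand side is not; the identity holds only by virtue of $y_{ab}$ being binary, and I would verify it by checking the two cases $y_{ab}=0$ (both sides equal $\log_2(1+2^{\langle\theta,x_{ab}\rangle})$) and $y_{ab}=1$ (the factor $2^{-\langle\theta,x_{ab}\rangle}$ can be pulled out of the logarithm on the left).

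Summing the per-pair identities over $A \times B$ yields $D_x(\theta,y)$, and adding the prior contribution yields $D_x + R_\sigma$ plus constants coming from $p_{Z|Y}$, the Gaussian normalization, and the overall proportionality in \eqref{eq:prob-model}. As none of these constants depends on $(\theta,y)$, the set of minimizers of $-\log_2 p_{Y,\Theta|X,Z}$ over $\{(\theta,y): y \in z\}$ coincides with the set of solutions of \eqref{eq:main-problem-logistic}, proving both directions of the equivalence. Finally, for the remark that the continuous relaxation need not be convex, it suffices to observe that relaxing $y \in z$ to a continuous domain leaves the bilinear coupling $-\langle\theta, x_{ab}\rangle\, y_{ab}$ in $D_x$, whose Hessian in the joint variable $(\theta, y)$ has a vanishing $y$-block but a nonzero $\theta$-$y$ cross term and is therefore indefinite; a single instance with one scalar nonzero feature already forces a negative determinant, so convexity fails in general.
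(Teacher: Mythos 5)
Your proposal is correct and follows essentially the same route as the paper's proof: reduce maximization to minimization of the negative base-2 logarithm, absorb the constraint $y \in z$ via the support of $p_{Z|Y}$, extract $R_\sigma(\theta)$ from the Gaussian prior up to an additive constant, and use the binary nature of $y_{ab}$ to turn the logistic log-likelihood into the summands of $D_x$ (the paper does this via the interpolation $\log_2 p(y_{ab}) = y_{ab}\log_2 p(1) + (1-y_{ab})\log_2 p(0)$ rather than your two-case identity, but these are the same observation). Your non-convexity argument also matches the paper's, which computes the Hessian with the same vanishing $y$-block and bilinear $\theta$--$y$ coupling; your explicit one-dimensional negative-determinant instance is, if anything, a slightly more concrete justification of ``need not be convex'' than the paper's statement about the quadratic form.
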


If the relation $y'$ is fixed to some $\hat y'$ (defined, for instance, by training data),
\eqref{eq:main-problem-logistic} 
specializes to the problem of estimating (learning) maximally probable model parameters.
This convex problem, stated below, is well-known as \emph{logistic regression}. It can be solved using convex optimization techniques which have been implemented in mature and numerically stable open source software, notably 
\cite{fan-2008}.
\begin{align}
\min_{\theta \in \mathbb{R}^K}\ 
D_x(\theta, \hat y) + R_\sigma(\theta)
\label{eq:learning}
\end{align}

If the model parameters $\theta$ are fixed to some $\hat\theta$ (learned, for instance, from training data, as described above),
\eqref{eq:main-problem-logistic} 
specializes to the problem of estimating (inferring) a maximally probable relation.
The computational complexity of this $01$-linear program, stated below, depends on the set $z$ of feasible relations. Three special cases are discussed in 
Section~\ref{section:special-cases}.
\begin{align}
\min_{y \in z} \quad -\hspace{-1ex}\sum_{ab \in A \times B} \hspace{-1ex} \langle \hat\theta, x_{ab} \rangle  y_{ab}
\label{eq:inference}
\end{align}

\begin{lemma}
\label{lemma:logistic-bound}
\vspace{1ex}
$0 < \inf\limits_{\theta, y} D_x(\theta, y) \leq |A| |B|$.
\end{lemma}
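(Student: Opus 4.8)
The plan is to exploit the separable structure of $D_x$ in \eqref{eq:main-problem-logistic-d}: since it is a sum over the $|A||B|$ pairs $ab$, both bounds can be obtained by controlling a single summand
\[
d_{ab}(\theta, y) := -\langle\theta, x_{ab}\rangle\, y_{ab} + \log_2\bigl(1 + 2^{\langle\theta, x_{ab}\rangle}\bigr)
\]
and then summing. The first step is to rewrite this summand in a sign-symmetric form. Writing $s := \langle\theta, x_{ab}\rangle$, a short computation that treats the cases $y_{ab} = 1$ and $y_{ab} = 0$ separately gives $d_{ab}(\theta,y) = \log_2\bigl(1 + 2^{-(2 y_{ab} - 1) s}\bigr)$, which is precisely $-\log_2$ of the logistic likelihood \eqref{eq:logistic} of the pair $ab$.

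For the lower bound I would use that this logistic likelihood is a probability taking values strictly in $(0,1)$ for every finite $s$ and every $y_{ab} \in \{0,1\}$; hence its negative binary logarithm is strictly positive, so $d_{ab}(\theta, y) > 0$ for all $\theta$ and all $y$. Summing over the $|A||B|$ pairs yields $D_x(\theta, y) > 0$ on the whole domain, so the infimum is nonnegative and is approached only through strictly positive values.

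For the upper bound I would simply evaluate $D_x$ at the distinguished point $\theta = 0$. There $s = \langle 0, x_{ab}\rangle = 0$ for every pair, independently of the feature vectors and of $y$, so each summand collapses to $\log_2(1 + 2^0) = \log_2 2 = 1$. Therefore $D_x(0, y) = |A||B|$ for every feasible $y$, and since an infimum cannot exceed the value at any feasible point, $\inf_{\theta, y} D_x(\theta, y) \le |A||B|$.

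The delicate step is the strictness of the lower bound. Pointwise positivity of $D_x$ only yields $\inf \ge 0$; promoting this to $\inf > 0$ is where care is needed, because an infimum of strictly positive numbers may itself be $0$. The hard part is therefore to argue that the summands cannot all be driven to $0$ simultaneously, i.e.\ that $|\langle\theta, x_{ab}\rangle|$ cannot be made arbitrarily large for every pair at once; this is feature-dependent (in particular it interacts with whether any $x_{ab}$ vanishes). I expect this reasoning, rather than the two bounding calculations, to be the crux of the argument.
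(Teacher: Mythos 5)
Your two bounding calculations are precisely the paper's proof: the paper establishes positivity of each summand by the same case analysis on $y_{ab} \in \{0,1\}$ (your sign-symmetric rewrite $\log_2\bigl(1 + 2^{-(2y_{ab}-1)\langle\theta, x_{ab}\rangle}\bigr)$ is just a compact form of it), and it obtains the upper bound by evaluating at $\theta = 0$, exactly as you do. The crux you are waiting for, however, does not exist: from pointwise positivity of the summands the paper concludes that every value of $D_x$ is positive and ``thus, the infimum exists,'' and stops there. In other words, the published proof makes precisely the leap from $D_x(\theta, y) > 0$ for all $(\theta, y)$ to $0 < \inf_{\theta, y} D_x(\theta, y)$ that you (correctly) refused to make.

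Your suspicion that this step is not merely delicate but genuinely feature-dependent is well founded: as literally stated, the strict lower bound on the infimum is false in general. Take a single pair $ab$ with $x_{ab} \neq 0$ and suppose $y_{ab} = 1$ is feasible; writing $t := \langle\theta, x_{ab}\rangle$, the objective equals $\log_2\bigl(1 + 2^{-t}\bigr)$, which tends to $0$ as $t \to \infty$, so $\inf_{\theta, y} D_x = 0$. The same degeneration occurs whenever some feasible $y$ is linearly separable with respect to the features, i.e.\ whenever there exists $\theta$ with $(2y_{ab} - 1)\langle\theta, x_{ab}\rangle > 0$ for all $ab$: scaling $\theta$ by $\lambda \to \infty$ drives every summand to $0$ simultaneously. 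The lemma therefore holds only in the weaker pointwise reading --- $D_x(\theta, y) > 0$ at every point, the infimum exists because $D_x$ is bounded below, and it is at most $|A||B|$. So your proposal reproduces everything the paper actually proves, and the gap you identified is a gap in the paper's own argument, not a deficiency of yours.
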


\subsection{Bernoulli Model}
\begin{lemma}
\label{lemma:bernoulli}
$(\hat\theta, \hat y)$ maximizes $p_{Y, \Theta | X, Z}$ defined by
\eqref{eq:prob-model}, \eqref{eq:likelihood} and \eqref{eq:bernoulli}
if and only if
$(\hat\theta, \hat y)$ is a solution of the mixed-integer nonlinear program written below.
Its continuous relaxation need not be convex.
\begin{align}
\min_{y \in z, \theta \in (0,1)^J}\ 
D_x(\theta, y) + R_\sigma(\theta)
\label{eq:main-problem-bernoulli}
\end{align}
\vspace{-2ex}
\begin{align}
D_x(\theta, y) & = 
\sum_{ab \in A \times B} \left(
    \left( \sum_{j \in J} (x_{ab})_j \log_2 \frac{1 - \theta_j}{\theta_j} \right) y_{ab}
    - \sum_{j \in J} (x_{ab})_j \log_2 (1 - \theta_j)
\right) 
\label{eq:main-problem-bernoulli-d}\\
R_\sigma(\theta) & = (1 - \sigma) \sum_{j \in J} \log_2 \theta_j (1 - \theta_j)
\label{eq:main-problem-bernoulli-r}
\end{align}
\end{lemma}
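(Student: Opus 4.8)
The plan is to follow the same route as the proof of Lemma~\ref{lemma:logistic} for the logistic model: convert the maximization of the joint density into the minimization of its negative base-$2$ logarithm, and then identify the two resulting sums with $D_x$ and $R_\sigma$. The argument is almost entirely mechanical, so I would keep the focus on the bookkeeping that makes the identification exact.

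First I would note that $-\log_2$ is strictly decreasing and that the proportionality constant hidden in \eqref{eq:prob-model}, together with the differentials $\mathrm{d}\theta_k$, do not depend on $(\theta, y)$; hence $(\hat\theta, \hat y)$ maximizes $p_{Y,\Theta|X,Z}$ if and only if it minimizes $-\log_2$ of the product on the right-hand side of \eqref{eq:prob-model}. By \eqref{eq:likelihood} the factor $p_{Z|Y}$ is zero unless $y \in z$ and is otherwise a positive constant, so every maximizer satisfies $y \in z$ and on the feasible set this factor contributes only a constant. The Beta prior in \eqref{eq:bernoulli} has support $(0,1)$, which fixes the domain $\theta \in (0,1)^J$. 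Because $-\log_2$ turns the two remaining products into sums, the objective separates into a data term from $\prod_{ab} p_{Y_{ab}|X_{ab},\Theta}$ and a regularization term from $\prod_k p_{\Theta_k}$.

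Next I would evaluate the data term by substituting the Bernoulli likelihood, obtaining $-\log_2 p_{Y_{ab}|X_{ab},\Theta} = -\sum_j (x_{ab})_j\,[\,y_{ab}\log_2\theta_j + (1-y_{ab})\log_2(1-\theta_j)\,]$. Splitting the factor $(1-y_{ab})$ and collecting the coefficient of $y_{ab}$ rewrites this as $\big(\sum_j (x_{ab})_j \log_2 \tfrac{1-\theta_j}{\theta_j}\big) y_{ab} - \sum_j (x_{ab})_j \log_2(1-\theta_j)$, and summing over $ab$ yields exactly $D_x(\theta,y)$ of \eqref{eq:main-problem-bernoulli-d}. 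For the regularization term I would substitute the Beta density: the normalizing factor $\Gamma(2\sigma)/\Gamma^2(\sigma)$ is independent of $(\theta,y)$ and contributes only an additive constant, while $-\sum_k \log_2\big(\theta_k^{\sigma-1}(1-\theta_k)^{\sigma-1}\big) = (1-\sigma)\sum_k \log_2 \theta_k(1-\theta_k)$ is $R_\sigma(\theta)$ of \eqref{eq:main-problem-bernoulli-r}, after identifying the parameter index set with the feature index set $J$ as in \eqref{eq:bernoulli}. Dropping the constants establishes the claimed equivalence.

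For the non-convexity remark I would argue directly on the relaxed objective. Even after relaxing $y$ to a box, the data term retains the bilinear coupling $\big(\sum_j (x_{ab})_j \log_2 \tfrac{1-\theta_j}{\theta_j}\big)\,y_{ab}$, a product of a nonlinear function of $\theta$ with the variable $y_{ab}$, whose $(\theta,y_{ab})$-Hessian has negative determinant and is therefore indefinite; moreover, for $\sigma<1$ the term $R_\sigma$ is a positive multiple of the concave function $\sum_j \log_2 \theta_j(1-\theta_j)$. Either feature alone makes the relaxed objective nonconvex, so a one-line Hessian sign check suffices. The computation is routine throughout; the only points I expect to demand care are the treatment of the proportionality constant and the differentials in the MAP step, and the reconciliation of the index sets $J$ and $K$, which is the subtlest part of an otherwise mechanical argument.
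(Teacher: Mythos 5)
Your proposal is correct and follows essentially the same route as the paper: convert the MAP problem to minimizing the negative base-$2$ logarithm (the paper reuses its general form \eqref{eq:main-problem-general-p} from the proof of Lemma~\ref{lemma:logistic}, you re-derive it inline), substitute the Bernoulli likelihood and Beta prior, drop the constants $|J|\log_2\bigl(\Gamma(2\sigma)/\Gamma^2(\sigma)\bigr)$ and the normalization, collect the coefficient of $y_{ab}$ to obtain $D_x$ and $R_\sigma$, and obtain non-convexity of the relaxation from the Hessian having a zero $yy$-block coupled to a nonzero $\theta y$-block, which is exactly the structural fact the paper exploits via its quadratic form \eqref{eq:quadratic-form-hessian-bernoulli}. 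The only caveat is your remark that the concavity of $R_\sigma$ for $\sigma<1$ \emph{alone} makes the objective nonconvex (a sum of a convex and a concave function can still be convex), but this is immaterial since your primary argument via the indefinite $(\theta_j,y_{ab})$ principal submatrix already suffices.
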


The problem of estimating (learning) an optimal $\hat\theta$ for a fixed $\hat y$
has the well-known and unique closed-form solution stated below which can be found in linear time. 
For every $j \in J$:
\begin{align}
\hat\theta_j = \frac{m_j^+ + (\sigma - 1)}{m_j^+ + m_j^- + 2(\sigma - 1)}
\quad m_j^+ := \sum_{ab \in A \times B} \hspace{-2ex} (x_{ab})_j \hat y_{ab}
\quad m_j^- := \sum_{ab \in A \times B} \hspace{-2ex} (x_{ab})_j (1 - \hat y_{ab})
\label{eq:learning-bernoulli}
\end{align}

The problem of estimating (inferring) optimal parameters $\hat y$ for a fixed $\hat\theta$ is a $01$-linear program of the same form as 
\eqref{eq:inference},
albeit with different coefficients in the objective function:
\begin{align}
\min_{y \in z} 
\quad 
\sum_{ab \in A \times B}
\left( \sum_{j \in J} (x_{ab})_j \log_2 \frac{1 - \hat\theta_j}{\hat\theta_j} \right) y_{ab}
\label{eq:inference-bernoulli}
\end{align}

\section{Special Cases}
\label{section:special-cases}

\subsection{Maps (Classification)}
\label{section:map}
Classification is the problem of estimating a \emph{map} 
from a finite, non-empty set $A$ of to-be-classified elements 
to a finite, non-empty set $B$ of labels.
A map from $A$ to $B$ is a relation $y' \in 2^{A \times B}$ 
that exhibits the properties which are stated below, 
firstly, in terms of first-order logic and, 
secondly, as constraints on the characteristic vector $y$ of $y'$, 
in terms of integer arithmetic.

\begin{tabular}{@{}llllr@{}}
& & First-Order Logic & Integer Arithmetic\\[1ex]
Existence of images
    & $\forall a \in A:$
    & $\exists b \in B (ab \in y')$
    & $1 \leq \sum_{b \in B} y_{ab}$ 
    & \eqnum\label{eq:map-constr-1} \\
Uniqueness of images
    & $\forall a \in A\ \forall \{b,b'\} \in {B \choose 2}:$
    & $ab \notin y' \vee ab' \notin y'$
    & $y_{ab} + y_{ab'} \leq 1$
    & \eqnum\label{eq:map-constr-2}
\end{tabular}

Obviously, classification is a special case of the problem of estimating a constrained relation. 
In order to establish one-versus-rest classification as a special case
(in Appendix~\ref{section:one-versus-rest}), 
we consider not a feature vector for every pair $ab \in A \times B$ but, instead, a feature vector for every element $a \in A$.
Moreover, we constrain the family of probability measures such that the learning problem separates into a set of independent optimization problems, one for each label.

\subsection{Equivalence Relations (Clustering)}
\label{section:eqr}
Clustering is the problem of estimating a \emph{partition} of a finite, non-empty set $A$.
A partition is a set of non-empty, pairwise disjoint subsets of $A$ whose union is $A$.
The set of all partitions of $A$ is characterized by the set of all equivalence relations on $A$.
For every partition $P \subseteq 2^A$ of $A$, the corresponding equivalence relation $y' \in 2^{A \times A}$ consists of precisely those pairs in $A$ whose elements belong to the same set in the partition. That is $\forall aa' \in A \times A: aa' \in y' \Leftrightarrow \exists S \in P: a \in S \wedge a' \in S$.
Therefore, clustering can be stated equivalently as the problem of estimating an equivalence relation $y' \in 2^{A \times A}$ on $A$.
Equivalence relations are, by definition, reflexive, symmetric and transitive.

\begin{tabular}{@{}l@{\hspace{4.2ex}}l@{\hspace{4.2ex}}l@{\hspace{4.2ex}}lr@{}}
& & First-Order Logic & Integer Arithmetic\\[1ex]
Reflexivity
    & $\forall a \in A:$
    & $aa \in y'$
    & $y_{aa} = 1$ 
    & \eqnum\label{eq:eqr-constr-1}\\
Symmetry 
    & $\forall \{a,a'\} \in {A \choose 2}:$
    & $aa' \in y' \Rightarrow a'a \in y'$
    & $y_{aa'} = y_{a'a}$
    & \eqnum\label{eq:eqr-constr-2}\\
Transitivity
    & $\forall \{a,a',a''\} \in {A \choose 3}:$
    & $aa' \in y' \wedge a'a'' \in y'$
    & $y_{aa'} + y_{a'a''} - 1 \leq y_{aa''}$\\
    & 
    & $\quad \Rightarrow aa'' \in y'$ 
    &
    & \eqnum\label{eq:eqr-constr-3}
\end{tabular}

For equivalence relations, the learning problem is of the general form 
\eqref{eq:learning}.
The inference problems
\eqref{eq:inference} and \eqref{eq:inference-bernoulli},
with the feasible set $z$ defined as the set of those $y \in \{0,1\}^{A \times A}$ that satisfy 
\eqref{eq:eqr-constr-1}--\eqref{eq:eqr-constr-3},
are instances of the NP-hard Set Partition Problem
\cite{chopra-1993}, 
known in machine learning as correlation clustering
\cite{bansal-2004,demaine-2006}.

The state of the art in solving this problem (exactly) is by branch-and-cut, 
exploiting properties of the Set Partition Polytope
\cite{deza-1997}.
Feasible solutions of large and hard instances can be found using heuristics, notably the Kernighan-Lin Algorithm
\cite{kernighan-1970}
that terminates in time $O(|A|^2 \log |A|)$.

\subsection{Linear Orders (Ranking)}
Ranking is the problem of estimating a \emph{linear order} on a finite, non-empty set $A$,
that is, a relation $y' \in 2^{A \times A}$ that is reflexive
\eqref{eq:eqr-constr-1}, transitive \eqref{eq:eqr-constr-3}, antisymmetric and total.

\begin{tabular}{@{}l@{\hspace{5ex}}l@{\hspace{5ex}}l@{\hspace{5ex}}l@{\hspace{8ex}}r@{}}
& & First-Order Logic & Integer Arithmetic\\[1ex]
Antisymmetry
    & $\forall \{a,a'\} \in {A \choose 2}:$
    & $aa' \notin y' \vee a'a \notin y'$
    & $y_{aa'} + y_{a'a} \leq 1$
    & \eqnum\label{eq:antisymmetry}\\
Totality
    & $\forall \{a,a'\} \in {A \choose 2}:$
    & $aa' \in y' \vee a'a \in y'$
    & $1 \leq y_{aa'} + y_{a'a}$
    & \eqnum\label{eq:totality}
\end{tabular}

For linear orders, the learning problem is of the general form 
\eqref{eq:learning}.
The inference problems
\eqref{eq:inference} and \eqref{eq:inference-bernoulli},
with the feasible set $z$ defined as the set of those $y \in \{0,1\}^{A \times A}$ that satisfy 
\eqref{eq:eqr-constr-1}, \eqref{eq:eqr-constr-3}, \eqref{eq:antisymmetry} and \eqref{eq:totality},
are instances of the NP-hard Linear Ordering Problem
\cite{marti-2011}.

The state of the art in solving this problem (exactly) is by branch-and-cut,
exploiting properties of the Linear Ordering Polytope,
cf.~\cite{marti-2011}, Chapter 6.
Feasible solutions of large and hard instances can be found using heuristics,
cf.~\cite{marti-2011}, Chapter 2.

\newpage
\section{Experiments}

The formalism introduced above is used to estimate maps, equivalence relations and linear orders from real data.
All figures reported in this section result from computations on one core of an 
Intel Xeon E5-2660 CPU operating at $2.20$~GHz.
Absolute computation times are shown in 
Appendix~\ref{section:appendix:experiments}.

\subsection{Maps (Classification)}
\label{section:exp:map}

\begin{figure}
\centering
%
%
\begin{tikzpicture}

\begin{axis}[%
width=0.35\columnwidth,
height=0.35\columnwidth,
unbounded coords=jump,
scale only axis,
xmode=log,
xmin=1,
xmax=1073741824,
xminorticks=true,
xlabel={$\sigma{}^{\text{-2}}$},
ymin=-0.005,
ymax=0.155,
ytick={   0, 0.05,  0.1, 0.15},
ylabel={Relative error},
legend style={at={(1.03,1)},anchor=north west,fill=none,draw=none,legend cell align=left},
y tick label style={/pgf/number format/.cd, fixed, fixed zerofill, precision=2, /tikz/.cd}
]
\addplot [color=white!70!red,solid]
  table[row sep=crcr]{2	0.0184199996292591\\
4	0.0227400008589029\\
8	0.0273800007998943\\
16	0.034120000898838\\
32	0.0413199998438358\\
64	0.0508599989116192\\
128	0.061039999127388\\
256	0.0713799968361855\\
512	0.0821999981999397\\
1024	0.0927800014615059\\
2048	0.102399997413158\\
4096	0.113200001418591\\
8192	0.122639998793602\\
16384	0.132860004901886\\
32768	0.146660000085831\\
65536	0.161459997296333\\
131072	0.17858000099659\\
262144	0.197339996695518\\
524288	0.215259999036789\\
1048576	nan\\
2097152	nan\\
4194304	nan\\
8388608	nan\\
16777216	nan\\
33554432	nan\\
67108864	nan\\
134217728	nan\\
268435456	nan\\
536870912	nan\\
};
\addlegendentry{d=1, Training};

\addplot [color=red,solid]
  table[row sep=crcr]{2	0.106600001454353\\
4	0.101800002157688\\
8	0.0964000001549721\\
16	0.0920000001788139\\
32	0.0879999995231628\\
64	0.0860000029206276\\
128	0.0852999985218048\\
256	0.0859000012278557\\
512	0.088699996471405\\
1024	0.0921000018715858\\
2048	0.09740000218153\\
4096	0.104800000786781\\
8192	0.112899996340275\\
16384	0.123400002717972\\
32768	0.13570000231266\\
65536	0.14920000731945\\
131072	0.164100006222725\\
262144	0.181999996304512\\
524288	0.201100006699562\\
1048576	nan\\
2097152	nan\\
4194304	nan\\
8388608	nan\\
16777216	nan\\
33554432	nan\\
67108864	nan\\
134217728	nan\\
268435456	nan\\
536870912	nan\\
};
\addlegendentry{d=1, Test};

\addplot [color=white!70!green,dashed]
  table[row sep=crcr]{2	1.99999994947575e-05\\
4	1.99999994947575e-05\\
8	1.99999994947575e-05\\
16	1.99999994947575e-05\\
32	1.99999994947575e-05\\
64	1.99999994947575e-05\\
128	1.99999994947575e-05\\
256	1.99999994947575e-05\\
512	1.99999994947575e-05\\
1024	0\\
2048	3.9999998989515e-05\\
4096	0.00015999999595806\\
8192	0.00039999998989515\\
16384	0.000880000006873161\\
32768	0.00371999992057681\\
65536	0.00953999999910593\\
131072	0.0177200008183718\\
262144	0.0282600000500679\\
524288	0.0404799990355968\\
1048576	0.0531199984252453\\
2097152	0.0661400035023689\\
4194304	0.0794000029563904\\
8388608	0.0935600027441978\\
16777216	0.109719999134541\\
33554432	0.127000004053116\\
67108864	0.147780001163483\\
134217728	0.171200007200241\\
268435456	0.196559995412827\\
536870912	0.224639996886253\\
};
\addlegendentry{d=2, m=8192, Training};

\addplot [color=green,dashed]
  table[row sep=crcr]{2	0.0370000004768372\\
4	0.0377000011503696\\
8	0.0373000018298626\\
16	0.0379999987781048\\
32	0.0381000004708767\\
64	0.0373000018298626\\
128	0.0375000014901161\\
256	0.0366999991238117\\
512	0.0370000004768372\\
1024	0.0366000011563301\\
2048	0.0364000014960766\\
4096	0.0357000008225441\\
8192	0.0355999991297722\\
16384	0.0355000011622906\\
32768	0.0364999994635582\\
65536	0.0390999987721443\\
131072	0.0414999984204769\\
262144	0.0467999987304211\\
524288	0.0513000003993511\\
1048576	0.0568999983370304\\
2097152	0.0662999972701073\\
4194304	0.0768999978899956\\
8388608	0.0879999995231628\\
16777216	0.101300001144409\\
33554432	0.116899996995926\\
67108864	0.138300001621246\\
134217728	0.162499994039536\\
268435456	0.1841000020504\\
536870912	0.212899997830391\\
};
\addlegendentry{d=2, m=8192, Test};

\addplot [color=white!70!green,solid]
  table[row sep=crcr]{2	1.99999994947575e-05\\
4	1.99999994947575e-05\\
8	1.99999994947575e-05\\
16	1.99999994947575e-05\\
32	1.99999994947575e-05\\
64	1.99999994947575e-05\\
128	1.99999994947575e-05\\
256	1.99999994947575e-05\\
512	1.99999994947575e-05\\
1024	1.99999994947575e-05\\
2048	1.99999994947575e-05\\
4096	0\\
8192	0.000180000002728775\\
16384	0.00095999997574836\\
32768	0.00225999997928739\\
65536	0.00786000024527311\\
131072	0.0157399997115135\\
262144	0.0258200000971556\\
524288	0.0381599999964237\\
1048576	0.0510599985718727\\
2097152	0.0639799982309341\\
4194304	0.0782200023531914\\
8388608	0.0935600027441978\\
16777216	0.109399996697903\\
33554432	0.128340005874634\\
67108864	0.147919997572899\\
134217728	0.170980006456375\\
268435456	0.19684000313282\\
536870912	0.224600002169609\\
};
\addlegendentry{d=2, m=16384, Training};

\addplot [color=green,solid]
  table[row sep=crcr]{2	0.0324999988079071\\
4	0.0324999988079071\\
8	0.0324999988079071\\
16	0.0324000008404255\\
32	0.032200001180172\\
64	0.0324000008404255\\
128	0.0322999991476536\\
256	0.032200001180172\\
512	0.0322999991476536\\
1024	0.0318000018596649\\
2048	0.0313999988138676\\
4096	0.031700000166893\\
8192	0.0315000005066395\\
16384	0.031700000166893\\
32768	0.0329000018537045\\
65536	0.0348000004887581\\
131072	0.0381000004708767\\
262144	0.0434000007808208\\
524288	0.0494999997317791\\
1048576	0.0551000013947487\\
2097152	0.0640999972820282\\
4194304	0.0758000016212463\\
8388608	0.0876000002026558\\
16777216	0.100100003182888\\
33554432	0.116800002753735\\
67108864	0.13740000128746\\
134217728	0.160400003194809\\
268435456	0.185100004076958\\
536870912	0.211300000548363\\
};
\addlegendentry{d=2, m=16384, Test};

\addplot [color=white!70!blue,dashed]
  table[row sep=crcr]{2	0\\
4	0\\
8	0\\
16	0\\
32	0\\
64	0\\
128	0\\
256	0\\
512	0\\
1024	0\\
2048	0\\
4096	0\\
8192	0\\
16384	0\\
32768	0\\
65536	0\\
131072	0\\
262144	0\\
524288	1.99999994947575e-05\\
1048576	3.9999998989515e-05\\
2097152	0.000199999994947575\\
4194304	0.00028000000747852\\
8388608	0.000500000023748726\\
16777216	0.00173999997787178\\
33554432	0.00553999980911613\\
67108864	0.0118199996650219\\
134217728	0.0204399991780519\\
268435456	0.0306599996984005\\
536870912	0.0418599992990494\\
};
\addlegendentry{d=3, m=8192, Training};

\addplot [color=blue,dashed]
  table[row sep=crcr]{2	0.046000000089407\\
4	0.0456999987363815\\
8	0.046000000089407\\
16	0.046000000089407\\
32	0.046000000089407\\
64	0.0458999983966351\\
128	0.046000000089407\\
256	0.046000000089407\\
512	0.0458999983966351\\
1024	0.0458999983966351\\
2048	0.046000000089407\\
4096	0.046000000089407\\
8192	0.0458999983966351\\
16384	0.0456000007688999\\
32768	0.0456999987363815\\
65536	0.046000000089407\\
131072	0.0458999983966351\\
262144	0.0458999983966351\\
524288	0.046000000089407\\
1048576	0.0454000011086464\\
2097152	0.0458000004291534\\
4194304	0.045099999755621\\
8388608	0.0439999997615814\\
16777216	0.0425999984145164\\
33554432	0.0419999994337559\\
67108864	0.0436000004410744\\
134217728	0.0461000017821789\\
268435456	0.0494999997317791\\
536870912	0.0555000007152557\\
};
\addlegendentry{d=3, m=8192, Test};

\addplot [color=white!70!blue,solid]
  table[row sep=crcr]{2	0\\
4	0\\
8	0\\
16	0\\
32	0\\
64	0\\
128	0\\
256	0\\
512	0\\
1024	0\\
2048	0\\
4096	0\\
8192	0\\
16384	0\\
32768	0\\
65536	0\\
131072	0\\
262144	0\\
524288	1.99999994947575e-05\\
1048576	3.9999998989515e-05\\
2097152	5.99999984842725e-05\\
4194304	9.99999974737875e-05\\
8388608	0.000199999994947575\\
16777216	0.00067999999737367\\
33554432	0.00292000011540949\\
67108864	0.00837999954819679\\
134217728	0.0158600006252527\\
268435456	0.0262000001966953\\
536870912	0.0372000001370907\\
};
\addlegendentry{d=3, m=16384, Training};

\addplot [color=blue,solid]
  table[row sep=crcr]{2	0.0366999991238117\\
4	0.0368000008165836\\
8	0.0368000008165836\\
16	0.0368999987840652\\
32	0.0368000008165836\\
64	0.0370000004768372\\
128	0.0370000004768372\\
256	0.0370000004768372\\
512	0.0368000008165836\\
1024	0.0366999991238117\\
2048	0.0368000008165836\\
4096	0.0368000008165836\\
8192	0.0368000008165836\\
16384	0.0366000011563301\\
32768	0.0364999994635582\\
65536	0.0366000011563301\\
131072	0.0364000014960766\\
262144	0.0368000008165836\\
524288	0.0361000001430511\\
1048576	0.0359999984502792\\
2097152	0.0361000001430511\\
4194304	0.0359000004827976\\
8388608	0.0350000001490116\\
16777216	0.0355000011622906\\
33554432	0.0359999984502792\\
67108864	0.0372000001370907\\
134217728	0.0401999987661839\\
268435456	0.0439999997615814\\
536870912	0.0491999983787537\\
};
\addlegendentry{d=3, m=16384, Test};

\end{axis}
\end{tikzpicture}%
\caption{Classification of images of handwritten digits (MNIST), using the proposed model.}
\label{figure:map-experiment-main}
\end{figure}
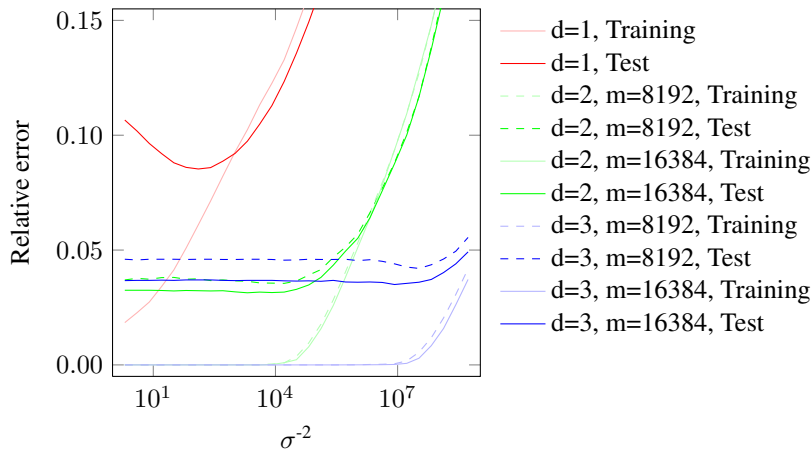

Firstly, we consider the problem of classifying images of handwritten digits of the raw MNIST data set
\cite{lecun-1998},
based on a 6272-dimensional vector of $01$-features (8 bits for each of 28$\cdot$28 pixels).
Multilinear polynomial liftings of the feature space are described in 
Appendix~\ref{section:multilinear-polynomial-lifting}.

Fig.~\ref{figure:map-experiment-main}
shows fractions of misclassified images.
It can be seen from this figure that the minimal error on the test set is as low as 8.53\% (at $\sigma^{-2} = 2^7$) for a linear function ($d=1$), thanks to the $01$-features.
For an approximation of a multilinear polynomial form of degree $d=2$ by $m=16348$ random features (see 
Appendix~\ref{section:multilinear-polynomial-lifting}
for details), the error drops to 3.14\% (at $\sigma^{-2} = 2^{12}$).
Reducing the number of random features by half increases the error by $0.5\%$.
Approximating a multilinear polynomial form of degree $d=3$ by $m \leq 16348$ random features yields worse results.
The overall best result of 3.14\% misclassified images falls short of the impressive state of the art of 0.21\% defined by deep learning
\cite{wan-2013}
and encourages future work on multilinear polynomial lifting.

\subsection{Equivalence Relations (Clustering)}
\label{section:exp:eqr}

\begin{figure}
\centering
\input{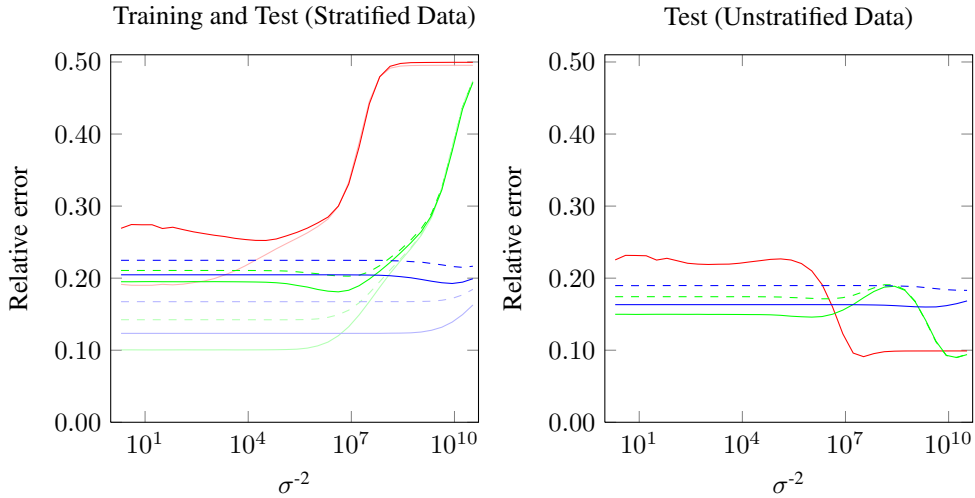}
%
%
\begin{tikzpicture}

\begin{axis}[%
width=0.35\columnwidth,
height=0.35\columnwidth,
scale only axis,
xmode=log,
xmin=1,
xmax=50000000000,
xminorticks=true,
xlabel={$\sigma{}^{\text{-2}}$},
ymin=0,
ymax=0.51,
ytick={  0, 0.1, 0.2, 0.3, 0.4, 0.5},
ylabel={Relative error},
title={Test (Unstratified Data)},
y tick label style={/pgf/number format/.cd, fixed, fixed zerofill, precision=2, /tikz/.cd}
]
\addplot [color=red,solid,forget plot]
  table[row sep=crcr]{2	0.225311311311311\\
4	0.231623623623624\\
8	0.231453453453453\\
16	0.231069069069069\\
32	0.225081081081081\\
64	0.227561561561562\\
128	0.22412012012012\\
256	0.220652652652653\\
512	0.219469469469469\\
1024	0.218982982982983\\
2048	0.219321321321321\\
4096	0.219479479479479\\
8192	0.220688688688689\\
16384	0.222540540540541\\
32768	0.224252252252252\\
65536	0.22605005005005\\
131072	0.226720720720721\\
262144	0.225179179179179\\
524288	0.220262262262262\\
1048576	0.21003003003003\\
2097152	0.190982982982983\\
4194304	0.160208208208208\\
8388608	0.122688688688689\\
16777216	0.096034034034034\\
33554432	0.091043043043043\\
67108864	0.0953053053053053\\
134217728	0.0979279279279279\\
268435456	0.0986846846846847\\
536870912	0.0989129129129129\\
1073741824	0.098950950950951\\
2147483648	0.098972972972973\\
4294967296	0.098984984984985\\
8589934592	0.099005005005005\\
17179869184	0.099005005005005\\
34359738368	0.099005005005005\\
};
\addplot [color=green,dashed,forget plot]
  table[row sep=crcr]{2	0.174272272272272\\
4	0.174272272272272\\
8	0.1743003003003\\
16	0.174272272272272\\
32	0.174292292292292\\
64	0.174272272272272\\
128	0.17427027027027\\
256	0.17427027027027\\
512	0.174268268268268\\
1024	0.174264264264264\\
2048	0.17425025025025\\
4096	0.174228228228228\\
8192	0.17429029029029\\
16384	0.174234234234234\\
32768	0.174188188188188\\
65536	0.174052052052052\\
131072	0.173781781781782\\
262144	0.173099099099099\\
524288	0.172626626626627\\
1048576	0.171945945945946\\
2097152	0.171347347347347\\
4194304	0.171445445445445\\
8388608	0.172934934934935\\
16777216	0.176182182182182\\
33554432	0.181179179179179\\
67108864	0.186472472472472\\
134217728	0.190878878878879\\
268435456	0.190698698698699\\
536870912	0.183757757757758\\
1073741824	0.16777977977978\\
2147483648	0.140524524524525\\
4294967296	0.111535535535536\\
8589934592	0.0925925925925926\\
17179869184	0.0900620620620621\\
34359738368	0.0942962962962963\\
};
\addplot [color=green,solid,forget plot]
  table[row sep=crcr]{2	0.149861861861862\\
4	0.149847847847848\\
8	0.149683683683684\\
16	0.149781781781782\\
32	0.149861861861862\\
64	0.149857857857858\\
128	0.149843843843844\\
256	0.149747747747748\\
512	0.149837837837838\\
1024	0.149671671671672\\
2048	0.149717717717718\\
4096	0.149641641641642\\
8192	0.149581581581582\\
16384	0.149467467467467\\
32768	0.149411411411411\\
65536	0.149129129129129\\
131072	0.148418418418418\\
262144	0.147203203203203\\
524288	0.146448448448448\\
1048576	0.145927927927928\\
2097152	0.146714714714715\\
4194304	0.149923923923924\\
8388608	0.155667667667668\\
16777216	0.163427427427427\\
33554432	0.172832832832833\\
67108864	0.181507507507508\\
134217728	0.187525525525526\\
268435456	0.188762762762763\\
536870912	0.183099099099099\\
1073741824	0.166980980980981\\
2147483648	0.141577577577578\\
4294967296	0.112474474474474\\
8589934592	0.0928908908908909\\
17179869184	0.0898178178178178\\
34359738368	0.0939219219219219\\
};
\addplot [color=blue,dashed,forget plot]
  table[row sep=crcr]{2	0.1896996996997\\
4	0.189693693693694\\
8	0.189583583583584\\
16	0.189707707707708\\
32	0.189675675675676\\
64	0.1896996996997\\
128	0.18973973973974\\
256	0.189745745745746\\
512	0.189663663663664\\
1024	0.189653653653654\\
2048	0.1896996996997\\
4096	0.189777777777778\\
8192	0.189701701701702\\
16384	0.189645645645646\\
32768	0.189631631631632\\
65536	0.18974974974975\\
131072	0.189695695695696\\
262144	0.189703703703704\\
524288	0.189713713713714\\
1048576	0.189733733733734\\
2097152	0.189697697697698\\
4194304	0.18970970970971\\
8388608	0.189721721721722\\
16777216	0.189771771771772\\
33554432	0.189667667667668\\
67108864	0.189583583583584\\
134217728	0.189489489489489\\
268435456	0.189321321321321\\
536870912	0.188828828828829\\
1073741824	0.188298298298298\\
2147483648	0.186866866866867\\
4294967296	0.185425425425425\\
8589934592	0.183995995995996\\
17179869184	0.183287287287287\\
34359738368	0.183033033033033\\
};
\addplot [color=blue,solid,forget plot]
  table[row sep=crcr]{2	0.163233233233233\\
4	0.163233233233233\\
8	0.163235235235235\\
16	0.163235235235235\\
32	0.163235235235235\\
64	0.163259259259259\\
128	0.163233233233233\\
256	0.163235235235235\\
512	0.163235235235235\\
1024	0.163235235235235\\
2048	0.163233233233233\\
4096	0.163233233233233\\
8192	0.163235235235235\\
16384	0.163233233233233\\
32768	0.163235235235235\\
65536	0.163233233233233\\
131072	0.163233233233233\\
262144	0.163231231231231\\
524288	0.163259259259259\\
1048576	0.163245245245245\\
2097152	0.163231231231231\\
4194304	0.163217217217217\\
8388608	0.163177177177177\\
16777216	0.163153153153153\\
33554432	0.162998998998999\\
67108864	0.162804804804805\\
134217728	0.162486486486486\\
268435456	0.161877877877878\\
536870912	0.161091091091091\\
1073741824	0.160276276276276\\
2147483648	0.159951951951952\\
4294967296	0.160098098098098\\
8589934592	0.16180980980981\\
17179869184	0.164584584584585\\
34359738368	0.168388388388388\\
};
\end{axis}
\end{tikzpicture}%
\caption{Classification of pairs of images of handwritten digits (MNIST). 
Colors and line styles have the same meaning as in
Fig.~\ref{figure:map-experiment-main}.}
\label{figure:eqr-experiment-main}
\end{figure}

Next, we consider the problem of clustering sets of images of handwritten digits, including the entire MNIST test set of $10^4$ images.
A training set $\{(x_{aa'}, y_{aa'})\}_{aa' \in T}$ of $|T| = 5 \cdot 10^5$ pairs of images is drawn randomly and without replacement from the MNIST training set, such that it contains as many pairs of images showing the same digit as pairs of images showing distinct digits.
(Results for learning from unstratified data are shown in 
Appendix~\ref{section:appendix:experiments}.)
For every pair $aa' \in T$ of images, $x_{aa'}$ is a 12544-dimensional $01$-vector (defined in 
Appendix~\ref{section:features-of-pairs}),
and $y_{aa'} = 1$ iff the images show (are labeled with) the same digit. 
Stratified and unstratified test sets of pairs of images are drawn randomly and without replacement from the MNIST test set.
Results for the independent classification of pairs (not a solution of the Set Partition Problem) are shown in 
Fig.~\ref{figure:eqr-experiment-main}.
The fraction of misclassified pairs is 
18.1\% on stratified test data and
15.0\% on unstratified test data,
both at $\sigma^{-2} = 2^{22}$ and for an approximation of a multilinear polynomial form of degree $d=2$ by 16384 random features.

For $\hat\theta$ learned with these parameters,
we infer equivalence relations on random subsets $A$ of the MNIST test set 
by solving the Set Partition Problem, that is,
\eqref{eq:inference} 
with the feasible set $z$ defined as the set of those $y \in \{0,1\}^{A \times A}$ that satisfy 
\eqref{eq:eqr-constr-1}--\eqref{eq:eqr-constr-3}.
For small instances, we use the branch-and-cut loop of the closed-source commercial software IBM ILOG Cplex.
In this loop, we separate the inequalities 
\eqref{eq:eqr-constr-1}--\eqref{eq:eqr-constr-3}.
Beyond these, we resort to the general classes of cuts implemented in Cplex.
For large instances, we initialize our implementation of the Kernighan-Lin Algorithm
with the feasible solution in which a pair $aa' \in A \times A$ is related iff there exists a path from $a$ to $a'$ in the complete graph $K_A$ such that, for all edges $a''a'''$ in the path, $\hat\theta_{a''a'''} > 0$.
%
An evaluation of equivalence relations on random subsets $A$ of the MNIST test set in terms of 
the fraction $e_{\textnormal{RI}}$ of misclassified pairs (one minus Rand's index),
the variation of information  
\cite{meila-2003}
and the objective value of the Set Partition Problem
is shown in
Tab.~\ref{table:results-eqr}.
It can be seen form this table that the fixed points $\hat y^{\kl}$ of the Kernighan-Lin Algorithm closely approximate certified optimal solutions $\hat y$.
It can also be seen that the runtime $t$ of the Kernighan-Lin Algorithm, unlike that of our branch-and-cut procedure, is practical for clustering the entire MNIST test set.
Finally, it can be seen that the heuristic feasible solution $\hat y^{\kl}$ of the Set Partition Problem  reduces the fraction of pairs of images classified incorrectly from 
15.0\% (for independent classification) 
or 10\% (for the trivial partition into one-elementary sets)
to $7.11\%$.

\begin{table}
\caption{Comparison of equivalence relations on stratified random subsets $A$ of the MNIST test set}
\label{table:results-eqr}
\begin{center}
\begin{small}
\begin{tabular}{@{}l@{\hspace{1.5ex}}l@{\hspace{1.5ex}}r@{\hspace{1.5ex}}r@{\hspace{1.5ex}}r@{\hspace{1.5ex}}r@{\hspace{1.5ex}}r@{\hspace{1.5ex}}r@{\hspace{1.5ex}}r@{\ }r@{\ }r@{}}
\hline\\[-1ex]

& 
& \multicolumn{1}{@{}l}{$|A|$}
& \multicolumn{1}{@{}l}{${|A| \choose 2}$}
& \multicolumn{1}{@{}l}{$e_{\mathrm{RI}}\ [\%]$}
& \multicolumn{1}{@{}l}{$\vi$ \cite{meila-2003}}
& \multicolumn{1}{@{}l}{Sets}
& \multicolumn{1}{@{}l}{Obj./${|A| \choose 2} \cdot 10^2$} 
& \multicolumn{3}{@{}l}{$t\ [s]$}\\[1ex]

\hline\\[-1ex]

\multirow{9}{*}{$\hat\theta^s$} & \multirow{3}{*}{$\hat y$} & $100$   & $4950$   & $8.81$ $\pm$ $1.65$   & $\bf 1.16$ $\pm$ $0.20$   & $12.5$ $\pm$ $1.5$   & $-16.88$ $\pm$ $2.36$   & $\bf 34.99$   & $\pm$    & $27.86$\\
& & $170$   & $14365$   & $7.23$ $\pm$ $0.87$   & $\bf 1.08$ $\pm$ $0.14$   & $16.6$ $\pm$ $2.5$   & $-16.33$ $\pm$ $1.18$   & $\bf 2777.30$   & $\pm$    & $4532.56$\\
& & $220$   & $24090$   & $7.70$ $\pm$ $0.59$   & $\bf 1.23$ $\pm$ $0.11$   & $19.6$ $\pm$ $2.5$   & $-16.00$ $\pm$ $0.91$   & $\bf 35424.71$   & $\pm$    & $95316.62$\\[1ex]

& \multirow{6}{*}{$\hat y^{\kl}$} & $100$   & $4950$   & $8.69$ $\pm$ $1.37$   & $\bf 1.15$ $\pm$ $0.21$   & $12.3$ $\pm$ $1.9$   & $-16.87$ $\pm$ $2.36$   & $\bf 0.01$   & $\pm$    & $0.00$\\
& & $170$   & $14365$   & $7.36$ $\pm$ $0.80$   & $\bf 1.09$ $\pm$ $0.15$   & $16.5$ $\pm$ $2.3$   & $-16.33$ $\pm$ $1.19$   & $\bf 0.03$   & $\pm$    & $0.01$\\
& & $220$   & $24090$   & $7.71$ $\pm$ $0.60$   & $\bf 1.24$ $\pm$ $0.11$   & $19.7$ $\pm$ $2.5$   & $-16.00$ $\pm$ $0.91$   & $\bf 0.05$   & $\pm$    & $0.01$\\
& & $260$   & $33670$   & $7.59$ $\pm$ $0.85$   & $\bf 1.25$ $\pm$ $0.14$   & $20.1$ $\pm$ $1.8$   & $-15.56$ $\pm$ $1.25$   & $\bf 0.06$   & $\pm$    & $0.01$\\
& & $300$   & $44850$   & $7.83$ $\pm$ $0.79$   & $\bf 1.22$ $\pm$ $0.10$   & $20.9$ $\pm$ $3.0$   & $-16.49$ $\pm$ $0.58$   & $\bf 0.09$   & $\pm$    & $0.01$\\

& & $\bf 10^4$ & $\bf 5 \cdot 10^9$ & \multicolumn{1}{@{}l}{$\bf 7.11$} & \multicolumn{1}{@{}l}{$\bf 1.49$} & \multicolumn{1}{@{}l}{$\bf 129$} & $\bf -16.65$ & $\bf 595.19$\\[1ex]

\hline
\end{tabular}
\end{small}
\end{center}
\end{table}

\subsection{Linear Orders (Ranking)}
\label{section:exp:order}

\begin{figure}
\includegraphics{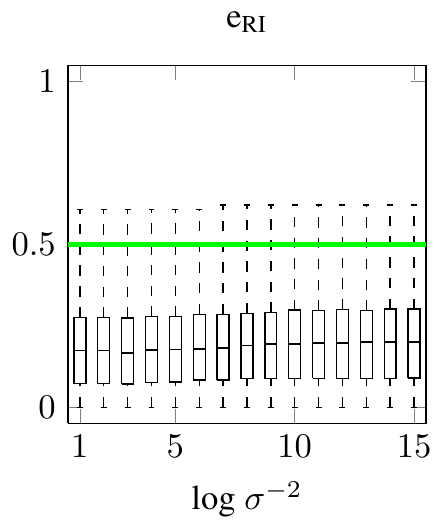}\hfill
\includegraphics{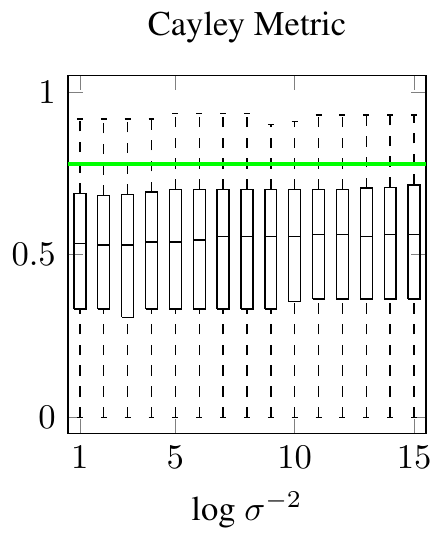}\hfill
\includegraphics{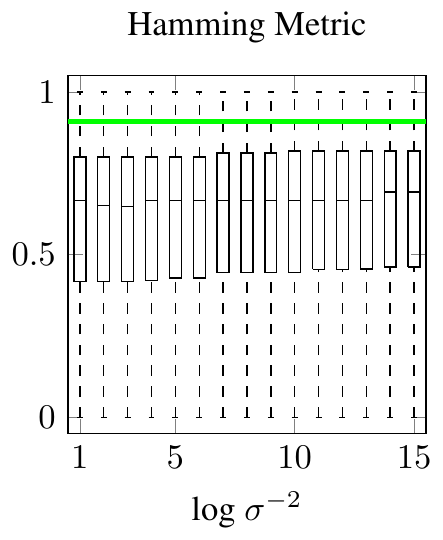}
\caption{Distances between the optimal sentences with respect to the model and the correct sentences.}
\label{figure:metric}
\end{figure}

Finally, we consider the problem of estimating the linear order of words in sentences.
Training data is provided by every well-formed sentence and is therefore abundant.
We estimate, for every pair $jj'$ of words $j$ and $j'$ in a dictionary,
the probability of the word $j$ to occur before the word $j'$ in a sentence.
Our dictionary consists of the 1,000 words most often used in the English Wikipedia.
Our training (test) data consists of 129,389 (10,000) sentences, drawn randomly and without replacement from those sentences in the English Wikipedia that contain only words from the dictionary.
We define $A$ to be the set of all \emph{occurrences} of words in a sentence, as the same word can occur multiple times.
For every pair $aa'$ of occurrences of words, the feature vector $x_{aa'} \in \{0,1\}^J$ is indexed by $J$, the set of all pairs of words in the dictionary.
We define $(x_{aa'})_{jj'} = 1$ iff $a$ is an occurrence of the word $j$ and $a'$ is an occurrence of the word $j'$.
With respect to the Bernoulli model, 
optimal parameters $\hat\theta$ are learned by evaluating the closed form 
\eqref{eq:learning-bernoulli},
which takes less than 10~seconds for the entire training set.
Every sentence of the test set is taken to be an unordered set of words and is permuted randomly for this experiment.
An optimal linear order of words is estimated by solving the Linear Ordering Problem, that is,
\eqref{eq:inference-bernoulli},
with the feasible set $z$ defined as the set of those $y \in \{0,1\}^{A \times B}$ that satisfy 
\eqref{eq:eqr-constr-1}, \eqref{eq:eqr-constr-3}, \eqref{eq:antisymmetry} and \eqref{eq:totality}.
For all instances, we use the branch-and-cut loop of Cplex, separating the inequalities 
\eqref{eq:eqr-constr-1}, \eqref{eq:eqr-constr-3}, \eqref{eq:antisymmetry} and \eqref{eq:totality}
and otherwise resorting to the general classes of cuts implemented in Cplex.

Metric distances between the optimal sentences with respect to the model
and the correct sentences are reported for three different metrics 
\cite{deza1998metrics}
in 
Fig.~\ref{figure:metric}.
For the summary statistics in this figure, the metrics have been normalized appropriately to account for the different lengths of sentences.
Horizontal lines indicate the value the normalized metric would assume for randomly ordered sentences.
It can be seen from this figure that the model is effective in estimating the order of words in sentences
and is not sensitive to the regularization parameter $\sigma$ for the dictionary and training data we used.

\section{Conclusion}
We have defined a family of probability measures on the set of all relations between two finite, non-empty sets
which offers a joint abstraction of multi-label classification, correlation clustering and ranking by linear ordering.
The problem of estimating (learning) a maximally probable measure, given (a training set of) related and unrelated pairs, is a convex optimization problem.
The problem of estimating (inferring) a maximally probable relation, given a measure, is a $01$-linear program which specializes to the NP-hard Set Partition Problem for equivalence relations
and to the NP-hard Linear Ordering Problem for linear orders.
Experiments with real data have shown that maximum probability learning and maximum probability inference are practical for some instances.

In the experiments we conduct, 
the distinction between learning and inference is motivated by the distinction between training data and test data.
It is well-known, however, that a distinction between learning and inference is inappropriate if there is just one data set and partial evidence about a to-be-estimated relation.
With respect to this setting
which falls into the broader research area of \emph{semi-supervised learning},
we have stated the problem of estimating $\theta$ and $x$ jointly 
as the mixed-integer nonlinear programs
\eqref{eq:main-problem-logistic}--\eqref{eq:main-problem-logistic-r}
and
\eqref{eq:main-problem-bernoulli}--\eqref{eq:main-problem-bernoulli-r}.
Toward a solution of these problems, 
we understand that a heuristic algorithm that alternates between the optimization of $\theta$ and $x$, 
aside form solving, in each iteration, a problem that is NP-hard for equivalence relations and linear orders,
can have sub-optimal fixed-points.
We also understand that the continuous relaxations of the problems are not necessarily (and not typically) convex.
We have stated these problems as mixed-integer nonlinear programs in order to foster the exchange of ideas between the machine learning community and the optimization communities.


\newpage\appendix

\section{Proofs}
\label{section:appendix:proofs}
\subsection{Proof of Lemma~\ref{lemma:logistic}}
From \eqref{eq:prob-model} follows
\begin{align}
\argmax{\theta \in \mathbb{R}^K, y \in \{0,1\}^{A \times B}}
    & \ p_{Y, \Theta | X, Z}(y, \theta, x, z) \nonumber \\
= \argmax{\theta \in \mathbb{R}^K, y \in z} \hspace{3ex}
    & \  
    \prod_{ab \in A \times B} \underbrace{p_{Y_{ab} | X_{ab}, \Theta}}_{=:p} (y_{ab}, \theta, x_{ab}) 
    \cdot \prod_{k \in K} \underbrace{p_{\Theta_k}}_{=: q}(\theta_k) \nonumber \\
= \argmax{\theta \in \mathbb{R}^K, y \in z} \hspace{3ex}
    & \ 
    \sum_{ab \in A \times B} \log_2 p(y_{ab}, \theta, x_{ab}) 
    + \sum_{k \in K} \log_2 q(\theta_k) \nonumber \\
= \argmin{\theta \in \mathbb{R}^K, y \in z} \hspace{3ex}
    & \  
    - \sum_{ab \in A \times B} \Big(
        y_{ab} \log_2 p(1, \theta, x_{ab}) 
        + (1 - y_{ab}) \log_2 p(0, \theta, x_{ab}) 
    \Big)
    - \sum_{k \in K} \log_2 q(\theta_k) \nonumber \\
= \argmin{\theta \in \mathbb{R}^K, y \in z} \hspace{3ex}
    & \  
    -\sum_{ab \in A \times B} \left(
        y_{ab} \log_2 \frac{p(1, \theta, x_{ab})}{p(0, \theta, x_{ab})}
        + \log_2 p(0, \theta, x_{ab}) 
    \right)
    - \sum_{k \in K} \log_2 q(\theta_k) \nonumber \\
= \argmin{\theta \in \mathbb{R}^K, y \in z} \hspace{3ex}
    & \  
    \sum_{ab \in A \times B} \left(
        y_{ab} \log_2 \frac{p(0, \theta, x_{ab})}{p(1, \theta, x_{ab})}
        - \log_2 p(0, \theta, x_{ab}) 
    \right)
    - \sum_{k \in K} \log_2 q(\theta_k) 
    \enspace .
    \label{eq:main-problem-general-p}
\end{align}

From \eqref{eq:main-problem-general-p} and \eqref{eq:logistic} follows
\begin{align}
\argmax{\theta \in \mathbb{R}^K, y \in \{0,1\}^{A \times B}}
    & \ p_{Y, \Theta | X, Z}(y, \theta, x, z) \nonumber \\
= \argmin{\theta \in \mathbb{R}^K, y \in z} \hspace{3ex}
    & \quad 
    \sum_{ab \in A \times B} \left(
        - y_{ab} \langle \theta, x_{ab} \rangle
        + \log_2 \left(1 + 2^{\langle \theta, x_{ab} \rangle} \right)
    \right)
    + |K| \log_2 (\sigma \sqrt{2 \pi})
    + \frac{\log_2 e}{2 \sigma^2} \|\theta\|_2^2 \nonumber \\
= \argmin{\theta \in \mathbb{R}^K, y \in z} \hspace{3ex}
    & \quad 
    \sum_{ab \in A \times B} \left(
        - y_{ab} \langle \theta, x_{ab} \rangle
        + \log_2 \left(1 + 2^{\langle \theta, x_{ab} \rangle} \right)
    \right)
    + \frac{\log_2 e}{2 \sigma^2} \|\theta\|_2^2
    \enspace .
\end{align}

Partial derivatives of the function $D_x$ defined in
\eqref{eq:main-problem-logistic-d}
with respect to $\theta \in \mathbb{R}^K$ and $y \in (0,1)^{A \times B}$ are
\begin{align}
(\partial_{\theta_j} D_x)(\theta, y)
    & = \sum_{ab \in A \times B} (x_{ab})_j \left(
        - y_{ab} + \frac{1}{1 + 2^{-\langle \theta, x_{ab} \rangle}}
    \right) \\
(\partial_{y_{ab}} D_x)(\theta, y) 
    & = -\langle \theta, x_{ab} \rangle \\
(\partial_{\theta_j, \theta_k} D_x)(\theta, y)
    & = \sum_{ab \in A \times B} (x_{ab})_j (x_{ab})_k
    \underbrace{    
    \frac
        {2^{\langle \theta, x_{ab} \rangle} \log_e 2}
        {\left(1 + 2^{\langle \theta, x_{ab} \rangle}\right)^2}
    }_{=: \xi_{ab}^2} \\
(\partial_{\theta_j, y_{ab}} D_x)(\theta, y) 
= (\partial_{y_{ab}, \theta_j} D_x)(\theta, y)
    & = - (x_{ab})_j \\
(\partial_{y_{ab}, y_{a'b'}} D_x)(\theta, y) 
    & = 0
\enspace .
\end{align}
Thus, the Hessian of $D_x$ is of the special form
\begin{align}
H = \left[\begin{array}{cc} H_{\theta \theta} & H_{\theta y} \\ H_{\theta y}^T & 0 \end{array}\right]
\qquad
\textnormal{with}
\qquad
H_{\theta \theta} = \sum\limits_{ab \in A \times B} \hspace{-1ex} (\xi_{ab} x_{ab}) (\xi_{ab} x_{ab})^T
\enspace .
\end{align}
It defines the quadratic form
\begin{align}
\left[\theta^T\ y^T\right] H \left[\begin{array}{c}\theta\\y\end{array}\right]
    = \log_e(2) \hspace{-2ex} \sum_{ab \in A \times B} 
        \frac
            {2^{\langle \theta, x_{ab} \rangle}}
            {\left(1 + 2^{\langle \theta, x_{ab} \rangle}\right)^2}        
        \langle \theta, x_{ab}\rangle^2
        - 2 \hspace{-1.5ex} \sum_{ab \in A \times B} \hspace{-1.5ex} y_{ab} \langle \theta, x_{ab} \rangle
\enspace .
\label{eq:quadratic-form-hessian-logistic}
\end{align}
This quadratic form need not be positive semi-definite.
Thus, $D_x$ need not be a convex function.
However, the Hessian $H_{\theta \theta}$ of the function $D_x(\cdot, \hat y)$ is positive semi-definite for any fixed $\hat y \in [0,1]^{A \times B}$.
Thus, the function $D_x(\cdot, \hat y)$ is convex.

\subsection{Proof of Lemma~\ref{lemma:bernoulli}}
From \eqref{eq:main-problem-general-p} and \eqref{eq:bernoulli} follows
\begin{align}
\argmax{\theta \in \mathbb{R}^K, y \in \{0,1\}^{A \times B}}
    & \ p_{Y, \Theta | X, Z}(y, \theta, x, z) 
    \nonumber \\
= \argmin{\theta \in \mathbb{R}^K, y \in z} \hspace{3ex}
    & \ 
    \sum_{ab \in A \times B} \left(
        y_{ab} \log_2 \frac{
            \prod_{j \in J} (1 - \theta_j)^{{(x_{ab})}_j}
        }{
            \prod_{j \in J} \theta_j^{{(x_{ab})}_j}
        }
        - \log_2 \prod_{j \in J} (1 - \theta_j)^{{(x_{ab})}_j}
    \right) 
    \nonumber \\
\phantom{= \argmin{\theta \in \mathbb{R}^K, y \in z} \hspace{3ex}}
    & \ 
    - \sum_{j \in J} \log_2 \frac{\Gamma(2\sigma)}{\Gamma^2(\sigma)} \theta_j^{\sigma - 1} (1 - \theta_j)^{\sigma - 1} 
    \nonumber \\
= \argmin{\theta \in \mathbb{R}^K, y \in z} \hspace{3ex}
    & \ 
    \sum_{ab \in A \times B} \left(
        y_{ab} \log_2 \prod_{j \in J} \left(
            \frac{1 - \theta_j}{\theta_j}
        \right)^{{(x_{ab})}_j}
        - \sum_{j \in J} {(x_{ab})}_j \log_2 (1 - \theta_j)
    \right) 
    \nonumber \\
\phantom{= \argmin{\theta \in \mathbb{R}^K, y \in z} \hspace{3ex}}
    & \ 
    - |J| \log_2 \frac{\Gamma(2\sigma)}{\Gamma^2(\sigma)} 
    - (\sigma - 1) \sum_{j \in J} \log_2 \theta_j (1 - \theta_j) 
    \nonumber \\
= \argmin{\theta \in \mathbb{R}^K, y \in z} \hspace{3ex}
    & \ 
    \sum_{ab \in A \times B} \left(
        \left( 
            \sum_{j \in J} {(x_{ab})}_j \log_2 \frac{1 - \theta_j}{\theta_j}
        \right) y_{ab}
        - \sum_{j \in J} {(x_{ab})}_j \log_2 (1 - \theta_j)
    \right) 
    \nonumber \\
\phantom{= \argmin{\theta \in \mathbb{R}^K, y \in z} \hspace{3ex}}
    & \ 
    + (1 - \sigma) \sum_{j \in J} \log_2 \theta_j (1 - \theta_j)
\enspace .
\end{align}

The form $D_x(\theta, y)$ defined in 
\eqref{eq:main-problem-bernoulli-d}
is equivalent to the form below.
\begin{align}
D_x(\theta, y)
& = - \sum_{j \in J} \left(
    \log (1 - \theta_j) \hspace{-1ex} \sum_{ab \in A \times B} \hspace{-1ex} (x_{ab})_j (1 - y_{ab})
    + \log (\theta_j) \hspace{-1ex} \sum_{ab \in A \times B} \hspace{-1ex} (x_{ab})_j y_{ab} 
\right)\\
& = - \sum_{j \in J} \left(
    m_j^- \log (1 - \theta_j)
    + m_j^+ \log \theta_j
\right) 
\enspace .
\end{align}
Partial derivarives of $D_x$ with respect to $\theta \in (0,1)^K$ and $y \in (0,1)^{A \times B}$ are
\begin{align}
(\partial_{\theta_j} D_x)(\theta, y)
    & = \frac{1}{\log_e 2} \left( \frac{m_j^-}{1 - \theta_j} - \frac{m_j^+}{\theta_j} \right) \\
(\partial_{y_{ab}} D_x)(\theta, y) 
    & = \sum_{j \in J} (x_{ab})_j \log_2 \frac{1 - \theta_j}{\theta_j} \\
(\partial_{\theta_j, \theta_k} D_x)(\theta, y)
    & = \frac{\delta_{jk}}{\log_e 2} \left( \frac{m_j^-}{(1 - \theta_j)^2} + \frac{m_j^+}{\theta_j^2} \right) \\ 
(\partial_{\theta_j, y_{ab}} D_x)(\theta, y) 
= (\partial_{y_{ab}, \theta_j} D_x)(\theta, y)
    & = \frac{-1}{\log_e 2} \frac{(x_{ab})_j}{(1 - \theta_j) \theta_j}\\
(\partial_{y_{ab}, y_{a'b'}} D_x)(\theta, y) 
    & = 0
\end{align}
Thus, the Hessian of $D_x$ is of the special form
\begin{align}
H = \left[\begin{array}{cc} H_{\theta \theta} & H_{\theta y} \\ H_{\theta y}^T & 0 \end{array}\right]
\qquad
\textnormal{with}
\qquad
(H_{\theta \theta})_{jk} = \frac{\delta_{jk}}{\log_e 2} \left( \frac{m_j^-}{(1 - \theta_j)^2} + \frac{m_j^+}{\theta_j^2} \right)
\enspace .
\end{align}
It defines the quadratic form
\begin{align}
\left[\theta^T\ y^T\right] H \left[\begin{array}{c}\theta\\y\end{array}\right]
    = \frac{1}{\log_e 2} \sum_{j \in J} \left(
        \frac{\theta_j^2}{(1 - \theta_j)^2} m_j^-
        - \frac{1 + \theta_j}{1 - \theta_j} m_j^+
    \right)
\enspace .
\label{eq:quadratic-form-hessian-bernoulli}
\end{align}
This quadratic form need not be positive semi-definite.
Thus, $D_x$ need not be a convex function.
However, the Hessian $H_{\theta \theta}$ of the function $D_x(\cdot, \hat y)$ is positive semi-definite for any fixed $\hat y \in [0,1]^{A \times B}$.
Thus, the function $D_x(\cdot, \hat y)$ is convex.

\subsection{Proof of Lemma~\ref{lemma:logistic-bound}}
\begin{proof}
Let $ab \in A \times B$ arbitrary and fixed.
If $\hat y_{ab} = 0$,
\begin{align*}
0 \ 
= \ \log_2 1 \ 
< \ \log_2 \left(1 + 2^{\langle \hat\theta, x_{ab} \rangle} \right) \ 
= \ \hat y_{ab} \langle \hat\theta, x_{ab} \rangle + \log_2 \left(1 + 2^{\langle \hat\theta, x_{ab} \rangle}\right)
\enspace .
\end{align*}
If $\hat y_{ab} = 1$,
\begin{align*}
0  
= - \langle x_{ab}, \theta \rangle + \langle x_{ab}, \theta \rangle
= - \hat y_{ab} \langle x_{ab}, \theta \rangle + \log_2 2^{\langle x_{ab}, \theta \rangle}
< - \hat y_{ab} \langle x_{ab}, \theta \rangle + \log_2 \left(1 + 2^{\langle x_{ab}, \theta \rangle} \right)
\enspace .
\end{align*}
That is, every summand in the form 
\eqref{eq:main-problem-logistic-d}
of $D_x$ is bounded from below by 0.
Therefore, $0 < D_x$ and thus, the infimum exists.
Moreover, for any $y \in \{0,1\}^{A \times B}$, $\inf_{\theta, \hat y} D_x(\theta, \hat y) \leq D(0, y) = |A| |B|$, which establishes the upper bound.
\end{proof}

\section{Multilinear Polynomial Lifting}
\label{section:multilinear-polynomial-lifting}
\subsection{Exact}
\begin{definition}
For any finite index set $J$, the \emph{multilinear polynomial lifting} of $\{0,1\}^J$ is the map
$l: \{0,1\}^J \to \{0,1\}^{2^J}$ such that $\forall v \in \{0,1\}^J\ \forall J' \subseteq J$:
\begin{align}
l(v)_{J'} = \prod_{j \in J'} v_j
\enspace .
\label{eq:lifting}
\end{align}
\end{definition}
For example, consider $J = \{1,2\}$ and $l: (v_1, v_2) \mapsto (1, v_1, v_2, v_1 v_2)$.

\begin{lemma}
\label{lemma:boros-hammer}
A one-to-one correspondence between functions $f: \{0,1\}^J \to \mathbb{R}$ and vectors $\theta \in \mathbb{R}^{2^J}$ is established by defining $\forall v \in \{0,1\}^J$:
\begin{align}
f(v) = \langle \theta, l(v)\rangle
\enspace .
\label{eq:boros-hammer}
\end{align}
\end{lemma}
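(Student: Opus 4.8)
The plan is to exploit that both objects in the claimed correspondence are real vector spaces of the same finite dimension, so that it suffices to verify that the map sending $\theta$ to the function $f$ defined by \eqref{eq:boros-hammer} is linear and injective. Write $\mathcal{F}$ for the space of all functions $f: \{0,1\}^J \to \mathbb{R}$. Since $\{0,1\}^J$ has $2^{|J|}$ elements, $\dim \mathcal{F} = 2^{|J|}$, and since the power set $2^J$ has $2^{|J|}$ elements, $\dim \mathbb{R}^{2^J} = 2^{|J|}$ as well. The assignment $T: \theta \mapsto f$, with $f(v) = \langle \theta, l(v) \rangle = \sum_{J' \subseteq J} \theta_{J'} \prod_{j \in J'} v_j$ by \eqref{eq:lifting}, is manifestly linear in $\theta$. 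A linear map between vector spaces of equal finite dimension is bijective as soon as it is injective, so the entire lemma reduces to showing $T(\theta) = 0 \Rightarrow \theta = 0$.

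First I would restrict attention to arguments $v$ that are characteristic vectors of subsets of $J$, which is no loss since these exhaust $\{0,1\}^J$. For $S \subseteq J$ let $\chi_S \in \{0,1\}^J$ denote its characteristic vector. Because $l(\chi_S)_{J'} = \prod_{j \in J'} (\chi_S)_j$ equals $1$ exactly when $J' \subseteq S$ and $0$ otherwise, evaluation at the characteristic vectors yields the system
\begin{align}
f(\chi_S) = \sum_{J' \subseteq S} \theta_{J'}
\enspace .
\label{eq:zeta-system}
\end{align}
The coefficient matrix of \eqref{eq:zeta-system}, indexed by subsets of $J$, is the zeta matrix of the Boolean lattice: ordering the subsets by nondecreasing cardinality makes it upper unitriangular, hence invertible.

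From here the conclusion follows by Möbius inversion on the subset lattice, which supplies the explicit inverse
\begin{align}
\theta_S = \sum_{J' \subseteq S} (-1)^{|S| - |J'|} f(\chi_{J'})
\enspace .
\label{eq:mobius-inversion}
\end{align}
In particular, $f \equiv 0$ forces $\theta_S = 0$ for every $S \subseteq J$, which establishes injectivity of $T$ and therefore the claimed bijection; moreover \eqref{eq:mobius-inversion} exhibits the inverse correspondence in closed form. The only step demanding care is the invertibility of the evaluation matrix, and I expect this to be the main (though entirely standard) obstacle. I would settle it either by the unitriangularity observation above or, equivalently, by substituting \eqref{eq:zeta-system} into \eqref{eq:mobius-inversion} and invoking the orthogonality identity $\sum_{T \subseteq J' \subseteq S} (-1)^{|S| - |J'|} = \delta_{T, S}$ for the Möbius function on the Boolean lattice, which collapses the double sum to $\theta_S$.
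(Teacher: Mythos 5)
Your proof is correct, but it takes a genuinely different route from the paper: the paper disposes of this lemma in a single line, citing Proposition~2 of Boros and Hammer \cite{boros-2002}, which is precisely the statement that every pseudo-Boolean function has a unique multilinear polynomial representation. What you have done, in effect, is reprove that cited result from scratch. Your reduction is sound: linearity of $T: \theta \mapsto \langle \theta, l(\cdot)\rangle$ together with the equality of dimensions ($\dim \mathbb{R}^{2^J} = 2^{|J|}$, matching the dimension of the space of functions on $\{0,1\}^J$) reduces the bijection claim to injectivity, and injectivity follows because evaluation at the characteristic vectors $\chi_S \in \{0,1\}^J$ of subsets $S \subseteq J$ yields the zeta transform $f(\chi_S) = \sum_{J' \subseteq S} \theta_{J'}$ of the Boolean lattice, whose matrix is unitriangular under any ordering by nondecreasing cardinality (two distinct sets of equal cardinality are never nested), hence invertible, with explicit inverse given by M\"{o}bius inversion $\theta_S = \sum_{J' \subseteq S} (-1)^{|S|-|J'|} f(\chi_{J'})$. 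The orthogonality identity you invoke to verify the inversion also checks out, since $\sum_{T \subseteq J' \subseteq S}(-1)^{|S|-|J'|} = (1-1)^{|S \setminus T|}$ for $T \subseteq S$. Your approach buys self-containedness and a closed-form inverse correspondence, which is useful if one actually wants to compute the coefficients $\theta$ from function values; the paper's citation buys brevity and defers to a standard reference, at the cost of being opaque to a reader unfamiliar with the pseudo-Boolean optimization literature.
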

\begin{proof}
By Proposition~2 in 
\cite{boros-2002}.
\end{proof}
For example, consider $J = \{1,2\}$ and $f(v) = \theta_0 + \theta_1 v_1 + \theta_2 v_2 + \theta_{12} v_1 v_2$.

\begin{lemma}
\label{lemma:bijection-logistic}
A one-to-one correspondence between functions $p: \{0,1\}^J \to (0,1)$ and functions $f: \{0,1\}^J \to \mathbb{R}$ is established by defining $\forall v \in \{0,1\}^J$:
\begin{align}
p(v) = \left(1 + 2^{-f(v)}\right)^{-1}
\enspace .
\label{eq:bijection-logistic}
\end{align}
\end{lemma}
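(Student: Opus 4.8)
The plan is to reduce this claim about function spaces to a statement about a single scalar map and then exhibit that map's explicit inverse. Observe that the correspondence in \eqref{eq:bijection-logistic} acts pointwise: writing $g: \mathbb{R} \to (0,1)$ for the scalar map $g(t) = (1 + 2^{-t})^{-1}$, we have $p = g \circ f$, that is, $p(v) = g(f(v))$ for every $v \in \{0,1\}^J$. Hence the assignment $f \mapsto p$ on function spaces is a bijection if and only if $g$ itself is a bijection from $\mathbb{R}$ onto $(0,1)$: given a bijective $g$, the rule $p \mapsto g^{-1} \circ p$ is a two-sided inverse of $f \mapsto g \circ f$, and conversely any inverse on function spaces restricts, by evaluating at a fixed $v$ and varying the value of $f$ there, to an inverse of $g$.

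First I would confirm that $g$ actually takes values in the open interval $(0,1)$: since $2^{-t} > 0$ for all $t \in \mathbb{R}$, the denominator $1 + 2^{-t}$ exceeds $1$, so $g(t) \in (0,1)$ and no real $t$ is mapped to an endpoint. Rather than argue surjectivity via monotonicity and the intermediate value theorem, I would exhibit the inverse directly. Define $h: (0,1) \to \mathbb{R}$ by $h(s) = \log_2 \frac{s}{1 - s}$, which is well defined because $s \in (0,1)$ forces $\frac{s}{1-s} > 0$. Solving $s = (1 + 2^{-t})^{-1}$ for $t$ gives $2^{-t} = \frac{1 - s}{s}$ and hence $t = \log_2 \frac{s}{1-s}$, identifying $h$ as the natural candidate.

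It then remains to verify the two composition identities, which is a short calculation. Since $g(t) = (1 + 2^{-t})^{-1}$ and $1 - g(t) = 2^{-t}(1 + 2^{-t})^{-1}$, the ratio $\frac{g(t)}{1 - g(t)}$ collapses to $2^{t}$, so $h(g(t)) = t$; conversely $2^{-h(s)} = \frac{1-s}{s}$ gives $g(h(s)) = (1 + \frac{1-s}{s})^{-1} = s$. This establishes $h \circ g = \mathrm{id}_{\mathbb{R}}$ and $g \circ h = \mathrm{id}_{(0,1)}$, so $g$ is a bijection with inverse $h$, and the function-space statement follows by the reduction above, the inverse of $f \mapsto p$ being $p \mapsto f$ with $f(v) = \log_2 \frac{p(v)}{1 - p(v)}$.

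I do not expect a genuine obstacle: the mathematical content is merely that the base-$2$ logistic sigmoid is a bijection whose inverse is the base-$2$ logit, and the passage to function spaces is formal. The only points requiring care are ensuring the codomain is exactly $(0,1)$ with neither endpoint attained, so that $h$ is defined on the entire image of $g$, and keeping the logarithm in base $2$ throughout so that the exponent in \eqref{eq:bijection-logistic} and the $\log_2$ in the inverse cancel cleanly.
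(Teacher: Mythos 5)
Your proof is correct and complete: the scalar map $g(t) = \left(1+2^{-t}\right)^{-1}$ is indeed a bijection from $\mathbb{R}$ onto $(0,1)$ with explicit inverse $h(s) = \log_2 \frac{s}{1-s}$, and the pointwise reduction from function spaces to this scalar bijection is valid. The paper's own proof consists of the single word ``Trivial,'' so your argument takes the same (and essentially the only) route, merely supplying the verification that the authors left to the reader.
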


\begin{proof}
Trivial.
\end{proof}

With respect to
\eqref{eq:boros-hammer} and \eqref{eq:bijection-logistic}
and the prior in
\eqref{eq:logistic},
the problem 
\eqref{eq:main-problem-general-p}
of estimating a $p_{Y_{ab} | X_{ab}, \Theta}: \{0,1\}^J \to (0,1)$ and a $y \in \{0,1\}^{A \times B}$ so as to maximize
\eqref{eq:prob-model}
can be written in the functional form below
and, thus, in the parametric form
\eqref{eq:main-problem-logistic}--\eqref{eq:main-problem-logistic-r}.
\begin{align}
\min_{f: \{0,1\}^J \to \mathbb{R}} \mathcal{D}(f) + \mathcal{R}_\sigma(f)
\label{eq:main-problem-functional}
\end{align}
\vspace{-2ex}
\begin{align}
\mathcal{D}(f) & :=
\sum_{ab \in A \times B} \left(
    - \langle \theta(f), l(x_{ab})\rangle y_{ab}
    + \log_2 \left(
        1 + 2^{\langle \theta(f), l(x_{ab})\rangle}
    \right)
\right) \\
\mathcal{R}_\sigma(f) & :=
R_\sigma(\theta(f))
\label{eq:main-problem-functional-r}
\end{align}

\subsection{Approximate}
Solving for the $2^{|J|}$ parameters $\theta$ in
\eqref{eq:main-problem-functional}--\eqref{eq:main-problem-functional-r}
is impractical for sufficiently large $|J|$.
To address this problem, 
we approximate the multi-linear polynomial form
\eqref{eq:boros-hammer}
for fixed $d,m \in \mathbb{N}$,
by a linear form $\langle \theta', l'(x) \rangle$ where $l': \{0,1\}^J \to \mathbb{Z}^m$ is drawn randomly from the distribution defined in
\cite{pham2013fast},
such that the inner product $\langle l'(x), l'(x') \rangle$ 
approximates the polynomial kernel $k(x,x') = (1 + \langle x, x' \rangle)^d$.

This approximation of the multi-\emph{variate} polynomial lifting
approximates the multi-\emph{linear} polynomial lifting 
\eqref{eq:lifting}
and thus, the multi-linear polynomial form
\eqref{eq:boros-hammer},
because every multi-linear polynomial form
is a multi-variate polynomial form,
and every multi-variate polynomial form in $\{0,1\}^J$ 
is equivalent to a multi-linear polynomial form in $\{0,1\}^J$ 
(because exponents are irrelevant).

\section{One-Versus-Rest Classification}
\label{section:one-versus-rest}

\begin{lemma}
\label{lemma:map-learning}
Let $v: A \to \{0,1\}^J$ arbitrary and fixed.
Let $x: A \times B \to \{0,1\}^{J \cup B}$ such that, for any $ab \in A \times B$, 
firstly, $(x_{ab})_J = v_J$
and, secondly, for all $b' \in B$, $(x_{ab})_{b'} = 1$ iff $b' = b$.
Let
\begin{align}
\hspace{-1ex}
\mathcal{F} & = \left\{ 
    f: \{0,1\}^{J \cup B} \to \mathbb{R} 
    \ \middle| \ 
    \exists g: B \to \mathbb{R}^{\{0,1\}^J}
    \forall w \in \{0,1\}^{J \cup B}:
        f(w) = \sum_{b \in B} w_b g_b(w_J)
\right\}
\ .
\label{eq:constrained-functions}
\end{align}
Then, $\hat f \in \argmin{f \in \mathcal{F}}\ \mathcal{D}_x(f) + \mathcal{R}_\sigma(f)$ iff,
for every $b \in B$, 
\begin{align}
\hat g_b \in \argmin{g_b: \{0,1\}^J \to \mathbb{R}}\ 
    \sum_{a \in A} \left( 
        - g_b(v_a) y_{ab} + \log_2 \left(
            1 + 2^{g_b(v_a)}
        \right)
    \right)
    + \mathcal{R}_\sigma(g_b)
\enspace .
\end{align}
\end{lemma}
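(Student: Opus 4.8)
The plan is to show that, under the given feature construction and the parametrization of $\mathcal{F}$, both the data term $\mathcal{D}_x$ and the regularizer $\mathcal{R}_\sigma$ decompose into a sum of $|B|$ contributions, one per label $b \in B$, each depending on $f$ only through the corresponding $g_b$. Once this separability is in place, and once the correspondence $f \leftrightarrow (g_b)_{b \in B}$ is recognized as a bijection that permits each $g_b$ to be varied freely and independently, the minimization of a separable objective over a product domain reduces to minimizing each summand on its own, which is precisely the asserted equivalence. First I would verify the bijection: given $f \in \mathcal{F}$, evaluating $f$ at the $w$ with $w_b = 1$, $w_{b'} = 0$ for $b' \neq b$, and $w_J$ arbitrary yields $f(w) = g_b(w_J)$, so each $g_b$ is uniquely determined by $f$, while conversely any tuple $(g_b)_{b \in B}$ defines an $f \in \mathcal{F}$.

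Next I would treat the data term. The only feature vectors entering $\mathcal{D}_x$ are the $x_{ab}$, whose $B$-block is the indicator of $b$ and whose $J$-block is $v_a$. Hence for $f \in \mathcal{F}$ one has $f(x_{ab}) = \sum_{b' \in B} (x_{ab})_{b'} g_{b'}((x_{ab})_J) = g_b(v_a)$, so substituting into the definition of $\mathcal{D}_x$ and grouping the double sum over $A \times B$ by $b$ gives $\mathcal{D}_x(f) = \sum_{b \in B} \sum_{a \in A} (-g_b(v_a) y_{ab} + \log_2(1 + 2^{g_b(v_a)}))$, a sum of per-label terms in which the $b$-th depends on $f$ only through $g_b$. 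This step uses only the structure of the features, not the regularizer.

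The main obstacle is to show that the regularizer $\mathcal{R}_\sigma(f) = R_\sigma(\theta(f))$ separates in the same way, since $R_\sigma$ is $\frac{\log_2 e}{2\sigma^2} \| \cdot \|_2^2$ applied to the Boros--Hammer coefficient vector $\theta(f)$ of Lemma~\ref{lemma:boros-hammer}, and a priori $\|\theta(f)\|_2^2$ need not split over $b$. Here I would exploit the form $f(w) = \sum_{b \in B} w_b g_b(w_J)$: expanding each $g_b$ in its multilinear form $g_b(w_J) = \sum_{S \subseteq J} (\theta(g_b))_S \prod_{j \in S} w_j$ and multiplying by $w_b$ shows that $w_b g_b(w_J) = \sum_{S \subseteq J} (\theta(g_b))_S \prod_{i \in S \cup \{b\}} w_i$, so every monomial contributed by the $b$-th summand is indexed by a set $S \cup \{b\} \subseteq J \cup B$ containing exactly one element of $B$, namely $b$. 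These index sets are pairwise distinct across $b$ and across $S$, and none lies in $J$ alone or contains two elements of $B$; hence the unique coefficient vector $\theta(f)$ has entry $(\theta(g_b))_S$ at coordinate $S \cup \{b\}$ and zero at every coordinate not of this form. The coordinates populated by different labels are therefore disjoint, so $\|\theta(f)\|_2^2 = \sum_{b \in B} \|\theta(g_b)\|_2^2$, and thus $\mathcal{R}_\sigma(f) = \sum_{b \in B} \mathcal{R}_\sigma(g_b)$.

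Finally I would combine the two decompositions into $\mathcal{D}_x(f) + \mathcal{R}_\sigma(f) = \sum_{b \in B} (\mathcal{D}^{(b)}(g_b) + \mathcal{R}_\sigma(g_b))$, where $\mathcal{D}^{(b)}(g_b) := \sum_{a \in A} (-g_b(v_a) y_{ab} + \log_2(1 + 2^{g_b(v_a)}))$, and invoke the elementary fact that the minimizers of a sum of functions over independent variables are exactly the tuples of minimizers of the individual summands. Together with the bijection established in the first step, this yields the stated equivalence.
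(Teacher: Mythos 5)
Your proposal is correct and follows essentially the same route as the paper's proof: both decompose the data term via $f(x_{ab}) = g_b(v_a)$ and the regularizer via the observation that $\theta(f)$ is supported on index sets containing exactly one element of $B$, with $\theta(f)_{S \cup \{b\}} = \theta(g_b)_S$, so the objective splits into independent per-label problems. Your write-up is somewhat more explicit than the paper's (the bijection $f \leftrightarrow (g_b)_{b \in B}$, the multilinear expansion justifying the support claim, and the final separable-minimization step are spelled out rather than left implicit), but the underlying argument is the same.
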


\begin{proof}
Let $J' \subseteq J \cup B$.
If $|J' \cap B| \not= 1$ then $\theta(f)_{J'} = 0$
by \eqref{eq:constrained-functions}.
Otherwise, there exists a unique $b \in J' \cap B$ and $\theta(f)_{J'} = \theta(g_b)_{J'}$
by \eqref{eq:constrained-functions}.
Thus, $\mathcal{R}_\sigma(f) = \sum_{b \in B} \mathcal{R}_\sigma(g_b)$. 
Moreover,
\begin{align*}
\mathcal{D}_x(f)
    & = \sum_{ab \in A \times B} \left(
        - f(x_{ab}) y_{ab}
        + \log_2 \left( 1 + 2^{f(x_{ab})} \right)
    \right) \\
    & = \sum_{ab \in A \times B} \left(
        - \left( \sum_{b' \in B} (x_{ab})_{b'} g_b((x_{ab})_J) \right) y_{ab}
        + \log_2 \left( 1 + 2^{ \left( \sum_{b' \in B} (x_{ab})_{b'} g_b((x_{ab})_J) \right) } \right)
    \right) \\
    & = \sum_{b \in B} \left(
        \sum_{a \in A} \left(
            - g_b(v_a) y_{ab}
            + \log_2 \left( 1 + 2^{g_b(v_a)} \right)
        \right)
    \right)
\enspace .
\end{align*}
\end{proof}

\begin{lemma}
\label{lemma:map-inference}
Let $\hat\theta \in \mathbb{R}^{A \times B}$ arbitrary and fixed.
Call $y \in \{0,1\}^{A \times B}$ a \emph{local solution} 
iff, for all $a \in A$, there exists a $b_a \in B$ such that, 
firstly, $b_a \in \mathrm{argmin}_{b' \in B} - \langle \hat\theta, x_{ab'} \rangle$ and, 
secondly, $\forall b \in B: y_{ab} = 1 \Leftrightarrow b = b_a$.
Then, $y$ is a solution
iff it is a local solution.
\end{lemma}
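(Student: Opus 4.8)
The plan is to exploit the fact that both the objective function and the feasible set of the map-inference problem \eqref{eq:inference} separate over the elements $a \in A$, so that the global minimization decomposes into $|A|$ independent per-element minimizations. First I would rewrite the objective $-\sum_{ab \in A \times B}\langle \hat\theta, x_{ab}\rangle y_{ab}$ as $\sum_{a \in A}\bigl(-\sum_{b \in B}\langle \hat\theta, x_{ab}\rangle y_{ab}\bigr)$, observing that the summand indexed by $a$ depends only on the variables $\{y_{ab}\}_{b \in B}$.

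Next I would characterize the feasible set. Combining the existence constraint \eqref{eq:map-constr-1}, namely $1 \leq \sum_{b \in B} y_{ab}$, with the uniqueness constraints \eqref{eq:map-constr-2}, namely $y_{ab} + y_{ab'} \leq 1$ for distinct $b, b'$, I would argue that any feasible $y \in \{0,1\}^{A \times B}$ has, for each $a$, exactly one $b$ with $y_{ab} = 1$. Thus feasible vectors are in bijection with choice functions $a \mapsto b_a$, and the objective evaluated at such a $y$ equals $\sum_{a \in A}\bigl(-\langle \hat\theta, x_{ab_a}\rangle\bigr)$, a sum of terms each depending only on the single choice $b_a$.

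The central step is then the elementary fact that the minimum of a separable sum over a product of feasible sets equals the sum of the minima: for any feasible $y$ we have $\sum_{a \in A}\bigl(-\langle \hat\theta, x_{ab_a}\rangle\bigr) \geq \sum_{a \in A}\min_{b' \in B}\bigl(-\langle \hat\theta, x_{ab'}\rangle\bigr)$, with equality iff every term attains its per-element minimum, that is, iff $b_a \in \argmin{b' \in B}\bigl(-\langle \hat\theta, x_{ab'}\rangle\bigr)$ for every $a$. The \emph{if} direction follows because a local solution makes each term equal to its minimum, so the total attains this lower bound and $y$ is globally optimal. For the \emph{only if} direction I would argue contrapositively: if some $b_a \notin \argmin{b' \in B}\bigl(-\langle \hat\theta, x_{ab'}\rangle\bigr)$, then $-\langle \hat\theta, x_{ab_a}\rangle$ strictly exceeds the minimum, and replacing the block $\{y_{ab}\}_{b \in B}$ by the indicator of an $\mathrm{argmin}$ label yields a feasible vector with strictly smaller objective, contradicting optimality.

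I do not anticipate a genuine obstacle here; the only point requiring care is verifying that existence and uniqueness together force exactly one label per element, so that the feasible set is a clean product set, after which the separability argument is immediate.
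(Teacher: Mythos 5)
Your proof is correct and takes essentially the same approach as the paper: both rest on the observation that the objective of \eqref{eq:inference} and the map constraints \eqref{eq:map-constr-1}--\eqref{eq:map-constr-2} separate over $a \in A$, so the global minimum equals the sum of per-element minima over $b \in B$. You are in fact somewhat more explicit than the paper's proof, which only writes out the decomposition establishing that local solutions are optimal and leaves the converse direction and the exactly-one-label-per-element characterization of feasibility implicit.
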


\begin{proof}
Any local solution is obviously feasible. Any local solution is optimal because
\begin{align*}
& \quad \min_{\{y \in \{0,1\}^{A \times B} | y \in z\}} 
    -\hspace{-1ex}\sum_{ab \in A \times B} \hspace{-1ex} \langle \hat\theta, x_{ab} \rangle y_{ab}\\
= & \quad \sum_{a \in A} \min_{\{y_{a\cdot} \in \{0,1\}^B | y \in z\}}
    - \langle \hat\theta, x_{ab} \rangle y_{ab} \\
= & \quad \sum_{a \in A} \min_{b \in B} 
    - \langle \hat\theta, x_{ab} \rangle y_{ab}
\enspace .
\end{align*}
\end{proof}

\section{Features of Pairs}
\label{section:features-of-pairs}
Consider the problem of estimating a relation on a set $A$, say, an equivalence relation or a linear order.
Instead of a feature vector for every pair $aa' \in A \times A$, that is,
instead of $x: A \times A \to \{0,1\}^J$,
we may be given a feature vector for every element $a \in A$, that is, $w: A \to \{0,1\}^L$.

Now, we need to define, for each pair $aa'$, a feature vector $x_{aa'} \in \{0,1\}^J$ with respect to $w_a$ and $w_{a'}$.
Ideally, $x_{aa'}$ should be invariant under transposition of $w_a$ and $w_{a'}$ and otherwise general. 
Our restriction to $01$-features affords a simple definition which has this property.
For every $01$-feature of elements, indexed by $l \in L$, 
two $01$-features of pairs, indexed by $j_{l1}, j_{l2} \in J$, 
are defined as
\begin{align}
(x_{aa'})_{j_{l1}}    & :=    (v_a)_l (v_{a'})_l \\
(x_{aa'})_{j_{l2}}    & :=    (v_a)_l + (v_{a'})_l - 2 (v_a)_l (v_{a'})_l
\end{align}
These $01$-features are invariant under transposition of $v_a$ and $v_{a'}$. 
Moreover, the multilinear polynomial forms in $x_{aa'}$ comprise the basic transposition invariant multilinear polynomial forms 
\begin{align}
(v_a)_l (v_{a'})_l 
    & = (x_{aa'})_{j_{l1}}\\
(v_a)_l + (v_{a'})_l 
    & = (x_{aa'})_{j_{l2}} + 2 (x_{aa'})_{j_{l1}}
\enspace .
\end{align}

\section{Complementary Experiments}
\label{section:appendix:experiments}

\subsection{Maps (Classification)}
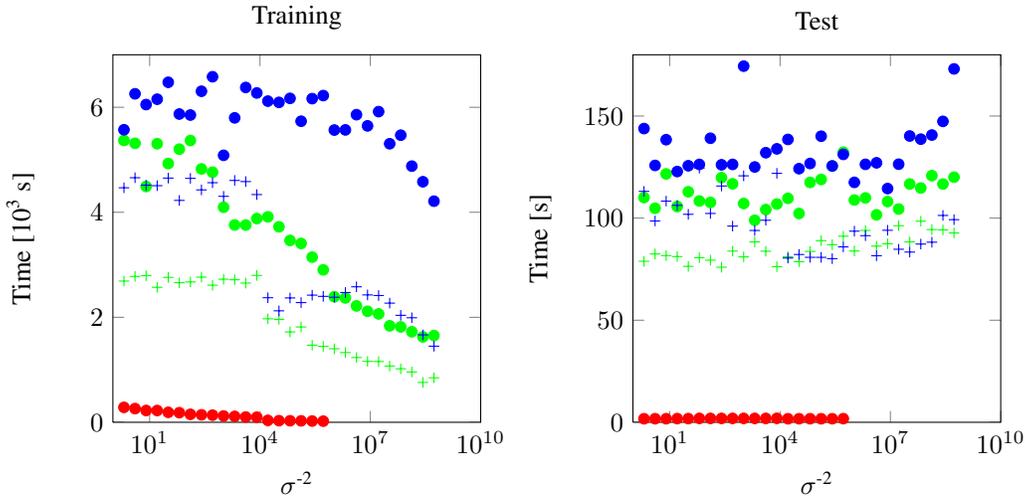
\begin{figure}
\centering
%
%
\begin{tikzpicture}

\begin{axis}[%
width=0.35\columnwidth,
height=0.35\columnwidth,
unbounded coords=jump,
scale only axis,
xmode=log,
xmin=1,
xmax=10000000000,
xminorticks=true,
xlabel={$\sigma{}^{\text{-2}}$},
ymin=0,
ymax=7,
ylabel={$\text{Time [10}^\text{3}\text{ s]}$},
title={Training},
y tick label style={/pgf/number format/.cd, fixed, fixed zerofill, precision=0, /tikz/.cd}
]
\addplot [color=red,only marks,mark=*,mark options={solid},forget plot]
  table[row sep=crcr]{2	0.284203720414\\
4	0.260753544623\\
8	0.22441207447\\
16	0.22340846351\\
32	0.189975647836\\
64	0.18249870683\\
128	0.151787688638\\
256	0.142160330447\\
512	0.135640800094\\
1024	0.120038378946\\
2048	0.1122643422\\
4096	0.100247338544\\
8192	0.096568557386\\
16384	0.036051876461\\
32768	0.030958555111\\
65536	0.029063398662\\
131072	0.027325693185\\
262144	0.023701850484\\
524288	0.02141608919\\
1048576	nan\\
2097152	nan\\
4194304	nan\\
8388608	nan\\
16777216	nan\\
33554432	nan\\
67108864	nan\\
134217728	nan\\
268435456	nan\\
536870912	nan\\
};
\addplot [color=green,only marks,mark=+,mark options={solid},forget plot]
  table[row sep=crcr]{2	2.689527725048\\
4	2.777867566208\\
8	2.792868188877\\
16	2.570558197226\\
32	2.759974176654\\
64	2.658530569582\\
128	2.673131338365\\
256	2.764555138861\\
512	2.610626079417\\
1024	2.724936436536\\
2048	2.714482300535\\
4096	2.651653761094\\
8192	2.797132433737\\
16384	1.971376879546\\
32768	1.960210397706\\
65536	1.720464120575\\
131072	1.815362476758\\
262144	1.468471340678\\
524288	1.446193257675\\
1048576	1.400741865396\\
2097152	1.327619743186\\
4194304	1.232813699682\\
8388608	1.159819691467\\
16777216	1.15656983643\\
33554432	1.071423488225\\
67108864	1.019031749405\\
134217728	0.958277920333\\
268435456	0.758633306918\\
536870912	0.847055722128\\
};
\addplot [color=green,only marks,mark=*,mark options={solid},forget plot]
  table[row sep=crcr]{2	5.368486929027\\
4	5.313461586052\\
8	4.491530068689\\
16	5.305674411517\\
32	4.928542618084\\
64	5.202020035572\\
128	5.366305704681\\
256	4.825719349934\\
512	4.763606852757\\
1024	4.095337374365\\
2048	3.758079903182\\
4096	3.756923680122\\
8192	3.879341547981\\
16384	3.912872169539\\
32768	3.723830959814\\
65536	3.463204658482\\
131072	3.406014557453\\
262144	3.145656727457\\
524288	2.903640457642\\
1048576	2.390709638668\\
2097152	2.368565252895\\
4194304	2.216495449563\\
8388608	2.114133853687\\
16777216	2.064213713041\\
33554432	1.837612818279\\
67108864	1.816892132465\\
134217728	1.724288320491\\
268435456	1.626182517098\\
536870912	1.652160971903\\
};
\addplot [color=blue,only marks,mark=+,mark options={solid},forget plot]
  table[row sep=crcr]{2	4.466723762653\\
4	4.657699126196\\
8	4.516768716238\\
16	4.5065086025\\
32	4.651277929113\\
64	4.226223519542\\
128	4.645974304175\\
256	4.426434738263\\
512	4.563387724793\\
1024	4.303664703499\\
2048	4.60640788187\\
4096	4.584713897694\\
8192	4.337051646605\\
16384	2.372206216561\\
32768	2.123122686654\\
65536	2.368287126965\\
131072	2.279766759263\\
262144	2.421504106901\\
524288	2.398479210598\\
1048576	2.38187934378\\
2097152	2.472139966916\\
4194304	2.580468888482\\
8388608	2.425976018009\\
16777216	2.414319075099\\
33554432	2.269549273517\\
67108864	2.037622067628\\
134217728	1.990971855117\\
268435456	1.665374527392\\
536870912	1.449702669427\\
};
\addplot [color=blue,only marks,mark=*,mark options={solid},forget plot]
  table[row sep=crcr]{2	5.571411963811\\
4	6.256452356174\\
8	6.051861639294\\
16	6.153429355892\\
32	6.475946128016\\
64	5.871125195833\\
128	5.852151622241\\
256	6.304993905257\\
512	6.581423235123\\
1024	5.08495242614\\
2048	5.798232130964\\
4096	6.377171481141\\
8192	6.272584890664\\
16384	6.117052605845\\
32768	6.093226534311\\
65536	6.169629348264\\
131072	5.734381699187\\
262144	6.164463000191\\
524288	6.222873383388\\
1048576	5.565933483602\\
2097152	5.568872321643\\
4194304	5.85904853499\\
8388608	5.645717908664\\
16777216	5.917202662694\\
33554432	5.305278035079\\
67108864	5.468322986851\\
134217728	4.878987417323\\
268435456	4.58127190217\\
536870912	4.211346565086\\
};
\end{axis}
\end{tikzpicture}%
%
%
\begin{tikzpicture}

\begin{axis}[%
width=0.35\columnwidth,
height=0.35\columnwidth,
unbounded coords=jump,
scale only axis,
xmode=log,
xmin=1,
xmax=10000000000,
xminorticks=true,
xlabel={$\sigma{}^{\text{-2}}$},
ymin=0,
ymax=180,
ylabel={Time [s]},
title={Test},
y tick label style={/pgf/number format/.cd, fixed, fixed zerofill, precision=0, /tikz/.cd}
]
\addplot [color=red,only marks,mark=*,mark options={solid},forget plot]
  table[row sep=crcr]{2	1.73243278400003\\
4	1.74129217499996\\
8	1.74144965400001\\
16	1.80107117400001\\
32	1.800964169\\
64	1.90000010599999\\
128	1.89275211699999\\
256	1.888167563\\
512	1.90256605499999\\
1024	1.89694334500001\\
2048	1.894503488\\
4096	1.886889567\\
8192	1.934884999\\
16384	1.733342905\\
32768	1.729959758\\
65536	1.73749022\\
131072	1.730392483\\
262144	1.73071987\\
524288	1.77554416500001\\
1048576	nan\\
2097152	nan\\
4194304	nan\\
8388608	nan\\
16777216	nan\\
33554432	nan\\
67108864	nan\\
134217728	nan\\
268435456	nan\\
536870912	nan\\
};
\addplot [color=green,only marks,mark=+,mark options={solid},forget plot]
  table[row sep=crcr]{2	78.850399936\\
4	82.4520475730001\\
8	81.6392337699999\\
16	81.1634346579999\\
32	76.3722548989999\\
64	80.6019707119999\\
128	79.399822721\\
256	75.9486015729999\\
512	83.8311892029997\\
1024	81.0166139809999\\
2048	88.337408632\\
4096	83.798136716\\
8192	76.2236039510003\\
16384	81.0144982430002\\
32768	78.6505392539998\\
65536	83.677952175\\
131072	88.941758942\\
262144	86.971087137\\
524288	91.141570234\\
1048576	83.895683837\\
2097152	93.968018644\\
4194304	86.2725023739999\\
8388608	87.4280387600002\\
16777216	96.276477308\\
33554432	88.335011437\\
67108864	98.4895676679998\\
134217728	94.36607955\\
268435456	94.2196789470001\\
536870912	92.723612752\\
};
\addplot [color=green,only marks,mark=*,mark options={solid},forget plot]
  table[row sep=crcr]{2	110.065302532\\
4	104.825920348\\
8	121.646643741\\
16	105.66730837\\
32	112.903598324\\
64	108.392217354\\
128	107.691830399\\
256	119.804741792999\\
512	116.803008411999\\
1024	107.153549031\\
2048	98.9357634810003\\
4096	104.1824404\\
8192	106.898473345\\
16384	109.688454086\\
32768	102.258567586\\
65536	117.478922182001\\
131072	118.934581089\\
262144	125.534873915\\
524288	132.291448998\\
1048576	108.858313947\\
2097152	109.905903005\\
4194304	101.607054916\\
8388608	108.118244051\\
16777216	104.449156568\\
33554432	116.713800341\\
67108864	114.754259754\\
134217728	120.800798991\\
268435456	116.691002842\\
536870912	120.052432997\\
};
\addplot [color=blue,only marks,mark=+,mark options={solid},forget plot]
  table[row sep=crcr]{2	113.102294877\\
4	98.5917635599999\\
8	108.327435773\\
16	106.249945224999\\
32	101.86397857\\
64	124.491293828\\
128	102.279794765\\
256	115.727688014\\
512	96.0785548379999\\
1024	120.707356816\\
2048	93.9476710430008\\
4096	98.9546083729992\\
8192	121.920996622\\
16384	80.4310748479998\\
32768	82.1807498469998\\
65536	80.795393629\\
131072	80.7814576960004\\
262144	80.1385827270001\\
524288	85.9107722209997\\
1048576	93.6063250069997\\
2097152	91.403202345\\
4194304	81.563666518\\
8388608	94.0438540740001\\
16777216	84.747443533\\
33554432	83.3679357199999\\
67108864	87.2846665090001\\
134217728	88.2018638240002\\
268435456	101.310346513\\
536870912	99.2442232880001\\
};
\addplot [color=blue,only marks,mark=*,mark options={solid},forget plot]
  table[row sep=crcr]{2	143.851259846\\
4	125.810026315999\\
8	138.405054319\\
16	122.80999879\\
32	125.642959405001\\
64	126.298340179999\\
128	139.108634675001\\
256	126.116734143\\
512	126.301741986\\
1024	174.427678567\\
2048	125.045439650999\\
4096	132.028546967\\
8192	133.921535027999\\
16384	138.534721509\\
32768	124.211004349\\
65536	126.754041864\\
131072	140.113119781\\
262144	125.55137662\\
524288	131.201932616999\\
1048576	117.504057131\\
2097152	126.335695492\\
4194304	127.079306539\\
8388608	114.477080406\\
16777216	126.399491404\\
33554432	140.220464896001\\
67108864	138.671872393\\
134217728	140.689217794\\
268435456	147.381912466\\
536870912	173.081682955\\
};
\end{axis}
\end{tikzpicture}%
\vspace{-2ex} 
\caption{Absolute computation times for the classification of images of handwritten digits (MNIST). 
Colors have the same meaning as in 
Fig.~\ref{figure:map-experiment-main}.
Symbols indicate
8192 (${\color{blue}+},{\color{green}+}$)
and 16384 (${\color{blue}\bullet},{\color{green}\bullet}$)
random features, respectively.}
\label{figure:map-runtimes}
\end{figure}
For the classification of images of handwritten digits,
absolute computation times are depicted in 
Fig.~\ref{figure:map-runtimes}.
It can be seen from this figure that it takes 
less than $10^4$ seconds to estimate (learn) all parameters of the probability measure from the entire MNIST training set, using the open-source software
\cite{fan-2008}
to solve the convex learning problem.
It can also be seen from this figure that it takes less than $10^3$ seconds to estimate (infer) the labels of all images of the MNIST test set, using our (trivial) C++ code to solve the (trivial) inference problem.

\subsection{Equivalence Relations (Clustering)}
Toward the clustering of sets of images of handwritten digits,
we reconsider the problem of classifying pairs of images as either showing or not showing the same digit.
A pair $aa'$ of images $a$ and $a'$ is labeled with $y_{aa'} = 1$ if the images show (are labeled with) the same digit.
It is labeled with $y_{aa'} = 0$, otherwise.
Analogous to the experiment described in 
Section~\ref{section:exp:eqr},
we now collect an \emph{unstratified} training set $\{(x_{aa'}, y_{aa'})\}_{aa' \in T}$ by drawing $|T| = 5 \cdot 10^5$ pairs of images randomly, without replacement, from the MNIST training set. 
As the MNIST training set contains (about) equally many images for each of 10 digits, the label $y_{aa'} = 0$ is (about) 9 times as abundant in $T$ as the label $y_{aa'} = 1$. 
A test set of the same cardinality is drawn analogously from the MNIST test set.
Results for the independent classification of pairs (not a solution of the Set Partition Problem) are shown in 
Fig.~\ref{figure:eqr-experiment-main-unstrat}.
It can be seen from this figure that the fraction of misclassified pairs is 
7.45\% for the unstratified test data, at 
$\sigma^{-2} = 2^{21}$ and for an approximation of a multilinear polynomial form of degree $d=2$ by 16384 random features.

For $\hat\theta$ learned with these parameters,
we infer equivalence relations on random subsets $A$ of the MNIST test set 
by solving the Set Partition Problem as described in
Section~\ref{section:exp:eqr}.
An evaluation analogous to 
Section~\ref{section:exp:eqr}
is shown in
Tab.~\ref{table:results-eqr-unstratified}.
In comparison with 
Tab.~\ref{table:results-eqr},
it can be seen that the inferred equivalence relations on previously unseen test sets have a smaller fraction $e_{\ri}$ of misclassified pairs when learning from unstratified (biased) training data.
However, they are worse in terms of the Variation of Information, number of sets and objective value.
This shows empirically that training data in this setting should be stratified.
\begin{figure}
\centering
\input{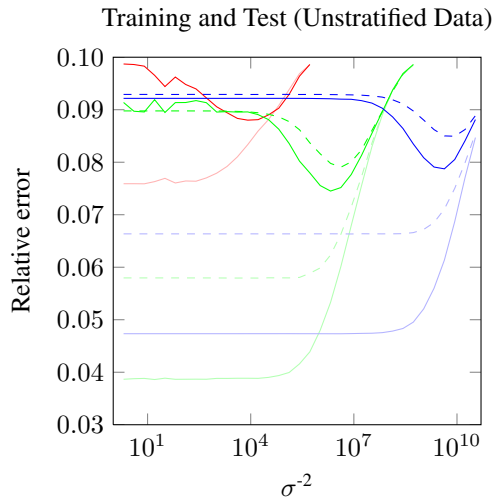}
\caption{Classification of pairs of images of handwritten digits (MNIST). 
Colors and line styles have the same meaning as in
Fig.~\ref{figure:map-experiment-main}.}
\label{figure:eqr-experiment-main-unstrat}
\end{figure}
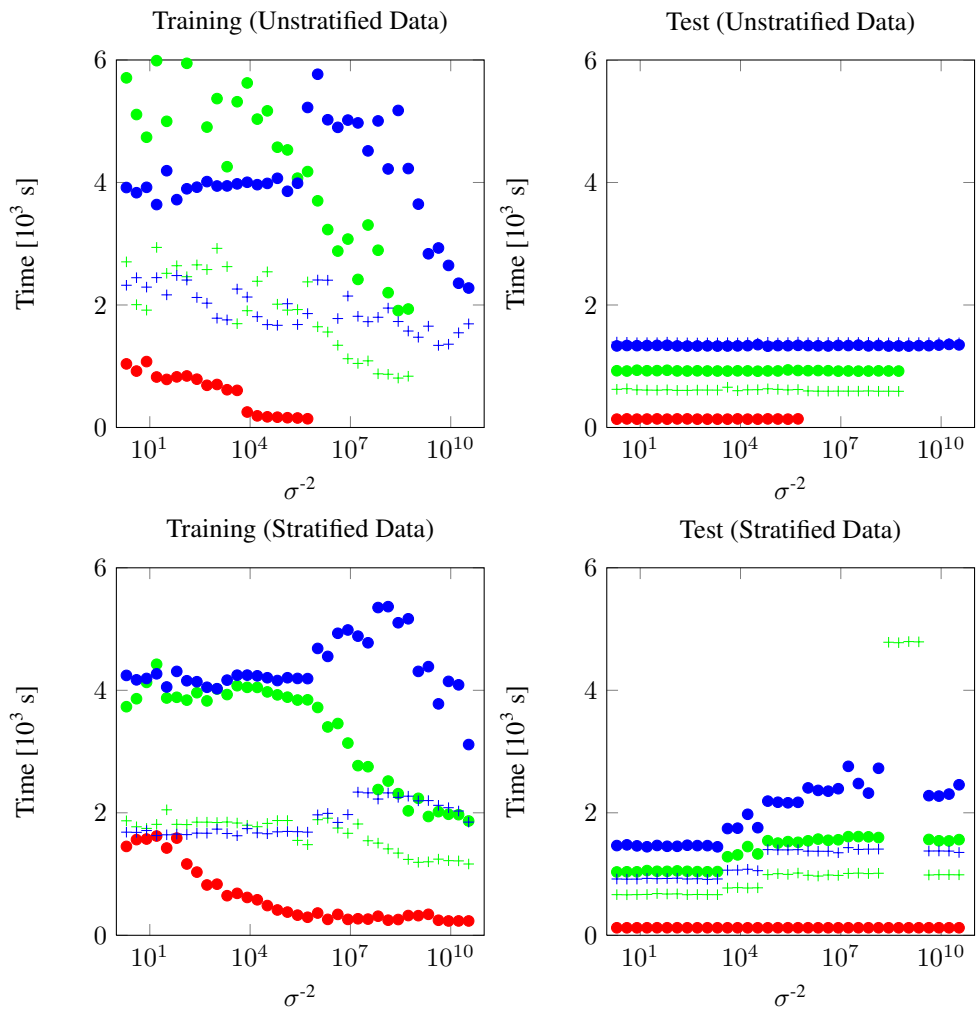
\begin{figure}
\centering
%
%
\begin{tikzpicture}

\begin{axis}[%
width=0.35\columnwidth,
height=0.35\columnwidth,
unbounded coords=jump,
scale only axis,
xmode=log,
xmin=1,
xmax=100000000000,
xminorticks=true,
xlabel={$\sigma{}^{\text{-2}}$},
ymin=0,
ymax=6,
ylabel={$\text{Time [10}^\text{3}\text{ s]}$},
title={Training (Unstratified Data)}
]
\addplot [color=red,only marks,mark=*,mark options={solid},forget plot]
  table[row sep=crcr]{2	1.03826940864\\
4	0.920954681159\\
8	1.075911738874\\
16	0.824184177253\\
32	0.784775768775\\
64	0.826361791451\\
128	0.841257890553\\
256	0.788558789316\\
512	0.688811843479\\
1024	0.702186815948\\
2048	0.616178153876\\
4096	0.605971739422\\
8192	0.252542732878\\
16384	0.189910633923\\
32768	0.173953939167\\
65536	0.168218749656\\
131072	0.159400134041\\
262144	0.155057425608\\
524288	0.142551941179\\
1048576	nan\\
2097152	nan\\
4194304	nan\\
8388608	nan\\
16777216	nan\\
33554432	nan\\
67108864	nan\\
134217728	nan\\
268435456	nan\\
536870912	nan\\
1073741824	nan\\
2147483648	nan\\
4294967296	nan\\
8589934592	nan\\
17179869184	nan\\
34359738368	nan\\
};
\addplot [color=green,only marks,mark=+,mark options={solid},forget plot]
  table[row sep=crcr]{2	2.70415574782\\
4	2.005290264251\\
8	1.915990265481\\
16	2.939880871987\\
32	2.517106466222\\
64	2.640863959934\\
128	2.461732872601\\
256	2.654814174886\\
512	2.578347517117\\
1024	2.924541773754\\
2048	2.626421110026\\
4096	1.694908252137\\
8192	1.906444185042\\
16384	2.387239800986\\
32768	2.541570820793\\
65536	2.012807773433\\
131072	1.915948901376\\
262144	1.925177274559\\
524288	2.377079420761\\
1048576	1.644548608271\\
2097152	1.558800354109\\
4194304	1.34233854411\\
8388608	1.122925632385\\
16777216	1.045681146781\\
33554432	1.08796438124\\
67108864	0.876364279204\\
134217728	0.868804361563\\
268435456	0.8068934082\\
536870912	0.837238031114\\
1073741824	nan\\
2147483648	nan\\
4294967296	nan\\
8589934592	nan\\
17179869184	nan\\
34359738368	nan\\
};
\addplot [color=green,only marks,mark=*,mark options={solid},forget plot]
  table[row sep=crcr]{2	5.708308759874\\
4	5.109394043867\\
8	4.738773299079\\
16	5.988776425759\\
32	4.998489884888\\
64	31.666645617118\\
128	5.945523725592\\
256	7.039262144645\\
512	4.904262228092\\
1024	5.369314058659\\
2048	4.257349096171\\
4096	5.318141455982\\
8192	5.625135312826\\
16384	5.034604755679\\
32768	5.169538050221\\
65536	4.575905561169\\
131072	4.53169175935\\
262144	4.070699382307\\
524288	4.177733674007\\
1048576	3.700618631434\\
2097152	3.2310092602\\
4194304	2.880259507396\\
8388608	3.075922337728\\
16777216	2.417422215369\\
33554432	3.30552518035\\
67108864	2.894046165604\\
134217728	2.200786870406\\
268435456	1.908790246558\\
536870912	1.934205352968\\
1073741824	nan\\
2147483648	nan\\
4294967296	nan\\
8589934592	nan\\
17179869184	nan\\
34359738368	nan\\
};
\addplot [color=blue,only marks,mark=+,mark options={solid},forget plot]
  table[row sep=crcr]{2	2.32184363913\\
4	2.445356954159\\
8	2.291611537856\\
16	2.446679258778\\
32	2.165264506304\\
64	2.480720838825\\
128	2.40701031438\\
256	2.122537240025\\
512	2.029392842081\\
1024	1.784110785547\\
2048	1.756756366517\\
4096	2.260281079529\\
8192	2.129708649385\\
16384	1.809199877585\\
32768	1.679700506208\\
65536	1.668177821659\\
131072	2.020913005427\\
262144	1.681040114184\\
524288	1.859793885599\\
1048576	2.407942942279\\
2097152	2.404600937925\\
4194304	1.778867017056\\
8388608	2.145911827601\\
16777216	1.815243691912\\
33554432	1.727635170561\\
67108864	1.7993393129\\
134217728	1.9493754399\\
268435456	1.730202032352\\
536870912	1.574776459043\\
1073741824	1.473601459746\\
2147483648	1.653999290592\\
4294967296	1.340472881178\\
8589934592	1.35954753504\\
17179869184	1.546108052514\\
34359738368	1.692100510256\\
};
\addplot [color=blue,only marks,mark=*,mark options={solid},forget plot]
  table[row sep=crcr]{2	3.91742460764\\
4	3.832911416636\\
8	3.92092072715\\
16	3.639357273381\\
32	4.192011078836\\
64	3.719621356276\\
128	3.896942842294\\
256	3.923171881984\\
512	4.014204503432\\
1024	3.942486627927\\
2048	3.943962852453\\
4096	3.977219721722\\
8192	4.003610969566\\
16384	3.963986924288\\
32768	3.984884765\\
65536	4.068460527669\\
131072	3.856483262954\\
262144	3.98808896459\\
524288	5.224596197017\\
1048576	5.767234814518\\
2097152	5.023944618975\\
4194304	4.900195980594\\
8388608	5.018729990396\\
16777216	4.972504593437\\
33554432	4.515994512201\\
67108864	5.005154729982\\
134217728	4.220991842836\\
268435456	5.175938812829\\
536870912	4.227196027287\\
1073741824	3.646370723253\\
2147483648	2.835827981289\\
4294967296	2.930648646412\\
8589934592	2.645818611439\\
17179869184	2.355190216762\\
34359738368	2.276912823839\\
};
\end{axis}
\end{tikzpicture}%
%
%
\begin{tikzpicture}

\begin{axis}[%
width=0.35\columnwidth,
height=0.35\columnwidth,
unbounded coords=jump,
scale only axis,
xmode=log,
xmin=1,
xmax=100000000000,
xminorticks=true,
xlabel={$\sigma{}^{\text{-2}}$},
ymin=0,
ymax=6,
ylabel={$\text{Time [10}^\text{3}\text{ s]}$},
title={Test (Unstratified Data)}
]
\addplot [color=red,only marks,mark=*,mark options={solid},forget plot]
  table[row sep=crcr]{2	0.136392979851\\
4	0.140002586197\\
8	0.13640732051\\
16	0.137700903114\\
32	0.140078516211\\
64	0.136407684192\\
128	0.139837017861\\
256	0.140028524384\\
512	0.136387408975\\
1024	0.136398197118\\
2048	0.13639876075\\
4096	0.136381328722\\
8192	0.140079616344\\
16384	0.136394817203\\
32768	0.136376984655\\
65536	0.140377216922\\
131072	0.138533063732\\
262144	0.136396395529\\
524288	0.140438583894\\
1048576	nan\\
2097152	nan\\
4194304	nan\\
8388608	nan\\
16777216	nan\\
33554432	nan\\
67108864	nan\\
134217728	nan\\
268435456	nan\\
536870912	nan\\
1073741824	nan\\
2147483648	nan\\
4294967296	nan\\
8589934592	nan\\
17179869184	nan\\
34359738368	nan\\
};
\addplot [color=green,only marks,mark=+,mark options={solid},forget plot]
  table[row sep=crcr]{2	0.622971722302\\
4	0.632776210856\\
8	0.614563573385\\
16	0.610644145963\\
32	0.609086550608\\
64	0.61552490403\\
128	0.606114170108\\
256	0.606067609076\\
512	0.611541168028\\
1024	0.611077901933\\
2048	0.610275065264\\
4096	0.657271107996\\
8192	0.600830615205\\
16384	0.615030187927\\
32768	0.617326823515\\
65536	0.631486353342\\
131072	0.622295591925\\
262144	0.613876668764\\
524288	0.620071405609\\
1048576	0.596529176431\\
2097152	0.593022103568\\
4194304	0.592528407596\\
8388608	0.593625148309\\
16777216	0.593279507803\\
33554432	0.593326000673\\
67108864	0.596250961283\\
134217728	0.591843333303\\
268435456	0.592099207833\\
536870912	0.590608837423\\
1073741824	nan\\
2147483648	nan\\
4294967296	nan\\
8589934592	nan\\
17179869184	nan\\
34359738368	nan\\
};
\addplot [color=green,only marks,mark=*,mark options={solid},forget plot]
  table[row sep=crcr]{2	0.926211516308\\
4	0.921725264359\\
8	0.935050761425\\
16	0.926213975579\\
32	0.927864869853\\
64	0.934388717848\\
128	0.92381514168\\
256	0.922324184519\\
512	0.925208693972\\
1024	0.922482828075\\
2048	0.922213639671\\
4096	0.921405800133\\
8192	0.924790430303\\
16384	0.923828930845\\
32768	0.919383941966\\
65536	0.923285914361\\
131072	0.923068334052\\
262144	0.939671314397\\
524288	0.930325382763\\
1048576	0.927277073667\\
2097152	0.929128298735\\
4194304	0.92680576585\\
8388608	0.920592978543\\
16777216	0.922853190437\\
33554432	0.920263837965\\
67108864	0.921600892916\\
134217728	0.924435813159\\
268435456	0.922779474464\\
536870912	0.921264138151\\
1073741824	nan\\
2147483648	nan\\
4294967296	nan\\
8589934592	nan\\
17179869184	nan\\
34359738368	nan\\
};
\addplot [color=blue,only marks,mark=+,mark options={solid},forget plot]
  table[row sep=crcr]{2	1.394223995429\\
4	1.39104471724\\
8	1.384137348069\\
16	1.385160227103\\
32	1.384611700017\\
64	1.393096432155\\
128	1.384043429099\\
256	1.381836090811\\
512	1.380100029168\\
1024	1.381384695933\\
2048	1.38096988935\\
4096	1.383985331961\\
8192	1.380593837984\\
16384	1.38548775667\\
32768	1.377722187189\\
65536	1.391476220145\\
131072	1.38017229829\\
262144	1.385744465941\\
524288	1.39404669924\\
1048576	1.383869332755\\
2097152	1.386831124259\\
4194304	1.387267942405\\
8388608	1.388192710746\\
16777216	1.386618027216\\
33554432	1.387892434436\\
67108864	1.389333045976\\
134217728	1.380724993081\\
268435456	1.382368007635\\
536870912	1.386447597308\\
1073741824	1.382869968541\\
2147483648	1.386062110387\\
4294967296	1.384766671196\\
8589934592	1.383705309965\\
17179869184	1.373080191377\\
34359738368	1.38192261503\\
};
\addplot [color=blue,only marks,mark=*,mark options={solid},forget plot]
  table[row sep=crcr]{2	1.331878727782\\
4	1.338378716193\\
8	1.335666311016\\
16	1.337659576908\\
32	1.340368697299\\
64	1.341896608822\\
128	1.3317556992\\
256	1.329905542146\\
512	1.331089732549\\
1024	1.332822042741\\
2048	1.329712449213\\
4096	1.330693466635\\
8192	1.332583098818\\
16384	1.335879364405\\
32768	1.352422541116\\
65536	1.327772741909\\
131072	1.336628550287\\
262144	1.335352010427\\
524288	1.340950259044\\
1048576	1.337293489056\\
2097152	1.334096462313\\
4194304	1.331245731481\\
8388608	1.339983781282\\
16777216	1.337986978947\\
33554432	1.344252945442\\
67108864	1.3326681936\\
134217728	1.339561771887\\
268435456	1.330440840931\\
536870912	1.33205072872\\
1073741824	1.329501356017\\
2147483648	1.336179412736\\
4294967296	1.335810661327\\
8589934592	1.345392199315\\
17179869184	1.356236745764\\
34359738368	1.348899809479\\
};
\end{axis}
\end{tikzpicture}
%
%
\begin{tikzpicture}

\begin{axis}[%
width=0.35\columnwidth,
height=0.35\columnwidth,
scale only axis,
xmode=log,
xmin=1,
xmax=100000000000,
xminorticks=true,
xlabel={$\sigma{}^{\text{-2}}$},
ymin=0,
ymax=6,
ylabel={$\text{Time [10}^\text{3}\text{ s]}$},
title={Training (Stratified Data)}
]
\addplot [color=red,only marks,mark=*,mark options={solid},forget plot]
  table[row sep=crcr]{2	1.45109191894531\\
4	1.56213415527344\\
8	1.57155444335938\\
16	1.62047521972656\\
32	1.42493444824219\\
64	1.58680297851563\\
128	1.16417297363281\\
256	1.03164013671875\\
512	0.821622131347656\\
1024	0.834716918945313\\
2048	0.646492004394531\\
4096	0.684795532226563\\
8192	0.617847839355469\\
16384	0.57961865234375\\
32768	0.485257965087891\\
65536	0.413484832763672\\
131072	0.37882080078125\\
262144	0.329003540039063\\
524288	0.294615234375\\
1048576	0.363449340820312\\
2097152	0.259949188232422\\
4194304	0.341241302490234\\
8388608	0.259665100097656\\
16777216	0.269780151367188\\
33554432	0.2643115234375\\
67108864	0.31174853515625\\
134217728	0.246824630737305\\
268435456	0.257383331298828\\
536870912	0.322404663085938\\
1073741824	0.320632629394531\\
2147483648	0.342630981445313\\
4294967296	0.245703384399414\\
8589934592	0.233975372314453\\
17179869184	0.234448654174805\\
34359738368	0.233957107543945\\
};
\addplot [color=green,only marks,mark=+,mark options={solid},forget plot]
  table[row sep=crcr]{2	1.86944055175781\\
4	1.77178918457031\\
8	1.74520031738281\\
16	1.81121411132813\\
32	2.04869458007812\\
64	1.81141870117187\\
128	1.80800500488281\\
256	1.84745007324219\\
512	1.84705224609375\\
1024	1.8414638671875\\
2048	1.8495712890625\\
4096	1.83299731445313\\
8192	1.79687451171875\\
16384	1.77309606933594\\
32768	1.82674682617187\\
65536	1.87373229980469\\
131072	1.87367919921875\\
262144	1.54916125488281\\
524288	1.47980981445313\\
1048576	1.88742297363281\\
2097152	1.91175256347656\\
4194304	1.7510810546875\\
8388608	1.66523095703125\\
16777216	1.81734020996094\\
33554432	1.54363427734375\\
67108864	1.50506298828125\\
134217728	1.41084716796875\\
268435456	1.34230639648438\\
536870912	1.23888037109375\\
1073741824	1.19161022949219\\
2147483648	1.20001416015625\\
4294967296	1.24339233398438\\
8589934592	1.218828125\\
17179869184	1.21486962890625\\
34359738368	1.16406372070313\\
};
\addplot [color=green,only marks,mark=*,mark options={solid},forget plot]
  table[row sep=crcr]{2	3.73224975585938\\
4	3.86243432617188\\
8	4.1292001953125\\
16	4.42534716796875\\
32	3.87446142578125\\
64	3.88591259765625\\
128	3.83960473632813\\
256	3.96092651367187\\
512	3.82717236328125\\
1024	4.02079809570312\\
2048	3.93028393554687\\
4096	4.07504223632812\\
8192	4.04837036132812\\
16384	4.04728955078125\\
32768	3.97395922851562\\
65536	3.92548999023438\\
131072	3.88805737304688\\
262144	3.84108618164062\\
524288	3.84446752929688\\
1048576	3.7196865234375\\
2097152	3.4019208984375\\
4194304	3.45766479492187\\
8388608	3.13850830078125\\
16777216	2.76989331054688\\
33554432	2.75253540039063\\
67108864	2.37957421875\\
134217728	2.51812915039062\\
268435456	2.31245263671875\\
536870912	2.03178283691406\\
1073741824	2.23645483398438\\
2147483648	1.93941540527344\\
4294967296	2.01856713867187\\
8589934592	1.97572277832031\\
17179869184	1.96944018554688\\
34359738368	1.86616674804688\\
};
\addplot [color=blue,only marks,mark=+,mark options={solid},forget plot]
  table[row sep=crcr]{2	1.68373046875\\
4	1.68040490722656\\
8	1.710607421875\\
16	1.62823181152344\\
32	1.64097961425781\\
64	1.6521083984375\\
128	1.64402209472656\\
256	1.66781091308594\\
512	1.66644543457031\\
1024	1.73326123046875\\
2048	1.6620595703125\\
4096	1.62559448242188\\
8192	1.74174755859375\\
16384	1.67065161132812\\
32768	1.65829479980469\\
65536	1.68914465332031\\
131072	1.69543334960937\\
262144	1.69182397460938\\
524288	1.68423254394531\\
1048576	1.96690478515625\\
2097152	1.99033020019531\\
4194304	1.84242395019531\\
8388608	1.97117309570313\\
16777216	2.33769750976563\\
33554432	2.32585522460937\\
67108864	2.22445825195312\\
134217728	2.3248701171875\\
268435456	2.25204272460938\\
536870912	2.2712294921875\\
1073741824	2.19547827148438\\
2147483648	2.1995458984375\\
4294967296	2.12180810546875\\
8589934592	2.083404296875\\
17179869184	2.02926928710938\\
34359738368	1.84774963378906\\
};
\addplot [color=blue,only marks,mark=*,mark options={solid},forget plot]
  table[row sep=crcr]{2	4.2431748046875\\
4	4.1707666015625\\
8	4.1934345703125\\
16	4.269248046875\\
32	4.05427490234375\\
64	4.308716796875\\
128	4.15582080078125\\
256	4.14007568359375\\
512	4.04962548828125\\
1024	4.02612084960938\\
2048	4.1659580078125\\
4096	4.246984375\\
8192	4.24767529296875\\
16384	4.236572265625\\
32768	4.204833984375\\
65536	4.15919921875\\
131072	4.20514892578125\\
262144	4.19366259765625\\
524288	4.19080908203125\\
1048576	4.683578125\\
2097152	4.55334326171875\\
4194304	4.9309638671875\\
8388608	4.98495068359375\\
16777216	4.88441357421875\\
33554432	4.775873046875\\
67108864	5.35102197265625\\
134217728	5.36577294921875\\
268435456	5.10280615234375\\
536870912	5.16743212890625\\
1073741824	4.3087119140625\\
2147483648	4.38646728515625\\
4294967296	3.77841333007812\\
8589934592	4.14565087890625\\
17179869184	4.08920361328125\\
34359738368	3.11402075195313\\
};
\end{axis}
\end{tikzpicture}%
%
%
\begin{tikzpicture}

\begin{axis}[%
width=0.35\columnwidth,
height=0.35\columnwidth,
scale only axis,
xmode=log,
xmin=1,
xmax=100000000000,
xminorticks=true,
xlabel={$\sigma{}^{\text{-2}}$},
ymin=0,
ymax=6,
ylabel={$\text{Time [10}^\text{3}\text{ s]}$},
title={Test (Stratified Data)}
]
\addplot [color=red,only marks,mark=*,mark options={solid},forget plot]
  table[row sep=crcr]{2	0.122775303204\\
4	0.122346331664\\
8	0.122415628541\\
16	0.122857544584\\
32	0.123177429254\\
64	0.123133139814\\
128	0.123226998281\\
256	0.122737195577\\
512	0.122742619882\\
1024	0.123044246839\\
2048	0.12288625301\\
4096	0.122776556266\\
8192	0.122800163949\\
16384	0.122437259809\\
32768	0.122813673485\\
65536	0.122674767292\\
131072	0.123250217422\\
262144	0.123249856748\\
524288	0.123227014837\\
1048576	0.123222620175\\
2097152	0.12274807844\\
4194304	0.12275495135\\
8388608	0.122748914144\\
16777216	0.123052919551\\
33554432	0.123236540959\\
67108864	0.123159568525\\
134217728	0.123187741759\\
268435456	0.1227546631\\
536870912	0.122752389327\\
1073741824	0.122752708363\\
2147483648	0.123087994052\\
4294967296	0.123235333095\\
8589934592	0.123248568906\\
17179869184	0.122758511408\\
34359738368	0.122756373539\\
};
\addplot [color=green,only marks,mark=+,mark options={solid},forget plot]
  table[row sep=crcr]{2	0.663574547706\\
4	0.661338947037\\
8	0.664036216044\\
16	0.666558763855\\
32	0.681297535409\\
64	0.673732463582\\
128	0.676267592084\\
256	0.663052224376\\
512	0.666022216714\\
1024	0.663782707169\\
2048	0.66311744869\\
4096	0.772209296017\\
8192	0.777732013918\\
16384	0.772070710836\\
32768	0.775221528482\\
65536	0.990573796158\\
131072	1.005855478593\\
262144	0.994342496859\\
524288	1.016004857079\\
1048576	0.97962810662\\
2097152	0.967287589701\\
4194304	0.983850397943\\
8388608	0.977459914333\\
16777216	1.009917260875\\
33554432	1.015011831777\\
67108864	1.007588140932\\
134217728	1.012792617638\\
268435456	4.784174946902\\
536870912	4.777174220988\\
1073741824	4.795353648069\\
2147483648	4.790696877663\\
4294967296	0.984547661089\\
8589934592	0.989503428514\\
17179869184	0.989020919085\\
34359738368	0.987654982397\\
};
\addplot [color=green,only marks,mark=*,mark options={solid},forget plot]
  table[row sep=crcr]{2	1.034120407995\\
4	1.036775710627\\
8	1.036183132979\\
16	1.05621396038\\
32	1.04492913488\\
64	1.044146006725\\
128	1.051542882333\\
256	1.044510087923\\
512	1.039369437791\\
1024	1.037181556852\\
2048	1.04017860002\\
4096	1.282472710513\\
8192	1.312862971567\\
16384	1.448407902881\\
32768	1.328375996275\\
65536	1.544454953522\\
131072	1.505407424797\\
262144	1.529365311157\\
524288	1.518319612194\\
1048576	1.542515612606\\
2097152	1.570428898291\\
4194304	1.551523347482\\
8388608	1.559569831769\\
16777216	1.610561639142\\
33554432	1.611250250506\\
67108864	1.607301072285\\
134217728	1.595880595081\\
268435456	7.702824701884\\
536870912	7.65653478672\\
1073741824	7.672137106676\\
2147483648	7.705583510843\\
4294967296	1.564714758738\\
8589934592	1.543089599185\\
17179869184	1.540800247598\\
34359738368	1.561962148086\\
};
\addplot [color=blue,only marks,mark=+,mark options={solid},forget plot]
  table[row sep=crcr]{2	0.919149049169\\
4	0.916539812129\\
8	0.917557790886\\
16	0.928833905404\\
32	0.924594934351\\
64	0.927370340282\\
128	0.931611145099\\
256	0.919228283931\\
512	0.922529323708\\
1024	0.913565935106\\
2048	0.919569004593\\
4096	1.063212262376\\
8192	1.065087441144\\
16384	1.077927263638\\
32768	1.055244223831\\
65536	1.39902396063\\
131072	1.394467782341\\
262144	1.394479513254\\
524288	1.398029333821\\
1048576	1.373564551817\\
2097152	1.371847711451\\
4194304	1.371209127748\\
8388608	1.347668612545\\
16777216	1.431329683892\\
33554432	1.401671832787\\
67108864	1.40605990279\\
134217728	1.407406082432\\
268435456	6.792685471253\\
536870912	6.677815236295\\
1073741824	6.691106026262\\
2147483648	6.675591758776\\
4294967296	1.376029847025\\
8589934592	1.37664234065\\
17179869184	1.374728712798\\
34359738368	1.352409995696\\
};
\addplot [color=blue,only marks,mark=*,mark options={solid},forget plot]
  table[row sep=crcr]{2	1.464830699132\\
4	1.47663953303\\
8	1.460165707157\\
16	1.445326578751\\
32	1.466609730368\\
64	1.453246502678\\
128	1.452844723409\\
256	1.470300425887\\
512	1.464276234193\\
1024	1.463836254086\\
2048	1.444541036716\\
4096	1.74270794832\\
8192	1.748893331176\\
16384	1.975792274393\\
32768	1.756843424915\\
65536	2.190623455743\\
131072	2.172334982807\\
262144	2.163210128231\\
524288	2.170899970648\\
1048576	2.406361330905\\
2097152	2.365537316876\\
4194304	2.353072971501\\
8388608	2.393135484853\\
16777216	2.758347031622\\
33554432	2.477846137353\\
67108864	2.321504317693\\
134217728	2.726638756347\\
268435456	11.445987036662\\
536870912	11.478419653668\\
1073741824	11.51273677372\\
2147483648	11.397288291722\\
4294967296	2.279362085228\\
8589934592	2.272593046455\\
17179869184	2.305382860665\\
34359738368	2.457449451277\\
};
\end{axis}
\end{tikzpicture}%
\caption{Absolute computation times for the classification of pairs of handwritten digits (learning and inference).}
\label{figure:eqr-runtimes}
\end{figure}
\begin{table}
\caption{Comparison of equivalence relations on unstratified random subsets $A$ of the MNIST test set}
\label{table:results-eqr-unstratified}
\begin{center}
\begin{small}
\begin{tabular}{@{}l@{\hspace{1.5ex}}l@{\hspace{1.5ex}}r@{\hspace{1.5ex}}r@{\hspace{1.5ex}}r@{\hspace{1.5ex}}r@{\hspace{1.5ex}}r@{\hspace{1.5ex}}r@{\hspace{1.5ex}}r@{\ }r@{\ }r@{}}
\hline\\[-1ex]

& 
& \multicolumn{1}{@{}l}{$|A|$}
& \multicolumn{1}{@{}l}{${|A| \choose 2}$}
& \multicolumn{1}{@{}l}{$e_{\mathrm{RI}}\ [\%]$}
& \multicolumn{1}{@{}l}{$\vi$ \cite{meila-2003}}
& \multicolumn{1}{@{}l}{Sets}
& \multicolumn{1}{@{}l}{Obj./${|A| \choose 2} \cdot 10^2$} 
& \multicolumn{3}{@{}l}{$t\ [s]$}\\[1ex]

\hline\\[-1ex]

\multirow{11}{*}{$\hat\theta$} & \multirow{5}{*}{$\hat y$} & 100   & 4950   & $6.36$ $\pm$ $0.74$   & $\bf 1.35$ $\pm$ $0.19$   & $43.0$ $\pm$ $5.8$   & $-3.84$ $\pm$ $1.17$   & $\bf 3.23$   & $\pm$    & $2.29$\\
& & 170   & 14365   & $6.48$ $\pm$ $0.38$   & $\bf 1.56$ $\pm$ $0.10$   & $64.1$ $\pm$ $5.7$   & $-3.76$ $\pm$ $0.66$   & $\bf 5.75$   & $\pm$    & $4.21$\\
& & 220   & 24090   & $6.81$ $\pm$ $0.41$   & $\bf 1.75$ $\pm$ $0.12$   & $77.8$ $\pm$ $3.5$   & $-3.51$ $\pm$ $0.55$   & $\bf 12.05$   & $\pm$    & $9.25$\\
& & 260   & 33670   & $6.85$ $\pm$ $0.35$   & $\bf 1.82$ $\pm$ $0.12$   & $89.4$ $\pm$ $5.9$   & $-3.46$ $\pm$ $0.53$   & $\bf 26.97$   & $\pm$    & $26.25$\\
& & 300   & 44850   & $6.57$ $\pm$ $0.22$   & $\bf 1.79$ $\pm$ $0.09$   & $94.8$ $\pm$ $7.2$   & $-3.68$ $\pm$ $0.30$   & $\bf 107.71$   & $\pm$    & $129.69$\\[1ex]

& \multirow{6}{*}{$\hat y^{\kl}$} & 100   & 4950   & $6.40$ $\pm$ $0.76$   & $\bf 1.36$ $\pm$ $0.19$   & $43.0$ $\pm$ $5.9$   & $-3.84$ $\pm$ $1.17$   & $\bf 0.01$   & $\pm$    & $0.00$\\
& & 170   & 14365   & $6.46$ $\pm$ $0.42$   & $\bf 1.56$ $\pm$ $0.12$   & $63.8$ $\pm$ $5.9$   & $-3.75$ $\pm$ $0.67$   & $\bf 0.03$   & $\pm$    & $0.01$\\
& & 220   & 24090   & $6.80$ $\pm$ $0.44$   & $\bf 1.75$ $\pm$ $0.13$   & $77.3$ $\pm$ $4.1$   & $-3.50$ $\pm$ $0.56$   & $\bf 0.06$   & $\pm$    & $0.02$\\
& & 260   & 33670   & $6.85$ $\pm$ $0.37$   & $\bf 1.83$ $\pm$ $0.14$   & $89.4$ $\pm$ $6.2$   & $-3.46$ $\pm$ $0.53$   & $\bf 0.09$   & $\pm$    & $0.03$\\
& & 300   & 44850   & $6.55$ $\pm$ $0.22$   & $\bf 1.78$ $\pm$ $0.09$   & $94.5$ $\pm$ $7.0$   & $-3.68$ $\pm$ $0.30$   & $\bf 0.15$   & $\pm$    & $0.04$\\

& & $\bf 10^4$ & $\bf 5 \cdot 10^9$ & \multicolumn{1}{@{}l}{$\bf 6.69$} & \multicolumn{1}{@{}l}{$\bf 2.88$} & \multicolumn{1}{@{}l}{$\bf 1168$} & $\bf -3.74$ & $\bf 1340.70$\\[1ex]

\hline
\end{tabular}
\end{small}
\end{center}
\end{table}

\subsection{Orders (Ranking)}
\begin{figure}
\centering
\includegraphics{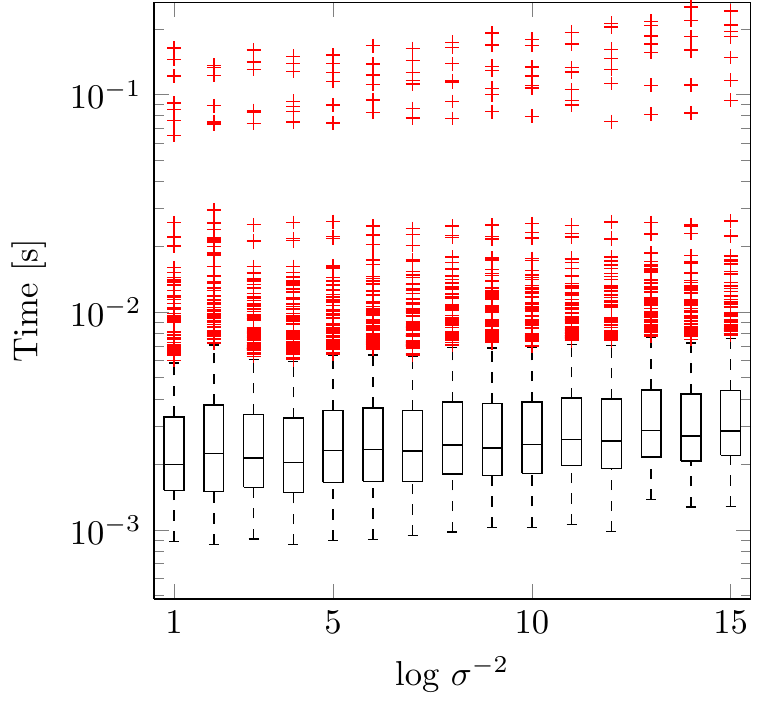}
\caption{Absolute computation time for linear ordering of words in sentences (inference).}
\label{figure:order-runtimes}
\end{figure}
For the linear ordering of words in sentences, absolute computation times are summarized in
Fig.~\ref{figure:order-runtimes}.

\clearpage
\bibliographystyle{unsrt}
\bibliography{nips2014}

\begin{thebibliography}{10}

\bibitem{bakir-2007}
G\"{u}khan~H. Bak{\i}r, Thomas Hofmann, Bernhard Sch\"{o}lkopf, Alexander~J.
  Smola, Ben Taskar, and S.~V.~N. Vishwanathan.
\newblock {\em Predicting Structured Data}.
\newblock MIT Press, 2007.

\bibitem{tsochantaridis-2005}
Ioannis Tsochantaridis, Thorsten Joachims, Thomas Hofmann, Yasemin Altun, and
  Yoram Singer.
\newblock Large margin methods for structured and interdependent output
  variables.
\newblock {\em JMLR}, 6(9), 2005.

\bibitem{lange-2005}
Tilman Lange, Martin~H. C, Law Anil, K.~Jain, and Joachim~M. Buhmann.
\newblock Learning with constrained and unlabeled data.
\newblock In {\em CVPR}, pages 731--738, 2005.

\bibitem{chopra-1993}
Sunil Chopra and M.~R. Rao.
\newblock The partition problem.
\newblock {\em Mathematical Programming}, 59(1--3):87--115, 1993.

\bibitem{bansal-2004}
Nikhil Bansal, Avrim Blum, and Shuchi Chawla.
\newblock Correlation clustering.
\newblock {\em Machine Learning}, 56(1--3):89--113, 2004.

\bibitem{demaine-2006}
Erik~D. Demaine, Dotan Emanuel, Amos Fiat, and Nicole Immorlica.
\newblock Correlation clustering in general weighted graphs.
\newblock {\em Theoretical Computer Science}, 361(2):172--187, 2006.

\bibitem{deza-1997}
Michel~M. Deza and Monique Laurent.
\newblock {\em Geometry of Cuts and Metrics}.
\newblock Springer, 1997.

\bibitem{xing-2003}
Eric~P. Xing, Andrew~Y. Ng, Michael~I. Jordan, and Stuart Russell.
\newblock Distance metric learning with application to clustering with
  side-information.
\newblock {\em NIPS}, pages 521--528, 2003.

\bibitem{andres-2011}
Bjoern Andres, J\"org~H. Kappes, Thorsten Beier, Ullrich K\"othe, and Fred~A.
  Hamprecht.
\newblock Probabilistic image segmentation with closedness constraints.
\newblock In {\em ICCV}, 2011.

\bibitem{andres-2012}
Bjoern Andres, Thorben Kroeger, Kevin~L. Briggman, Winfried Denk, Natalya
  Korogod, Graham Knott, Ullrich Koethe, and Fred~A Hamprecht.
\newblock Globally optimal closed-surface segmentation for connectomics.
\newblock In {\em ECCV}, 2012.

\bibitem{kappes-2013}
J{\"o}rg~H. Kappes, Markus Speth, Gerhard Reinelt, and Christoph Schn\"{o}rr.
\newblock Higher-order segmentation via multicuts.
\newblock {\em ArXiv e-prints}, 2013.

\bibitem{kim-2014}
Sungwoong Kim, Chang~D. Yoo, Sebastian Nowozin, and Pushmeet Kohli.
\newblock Image segmentation using higher-order correlation clustering.
\newblock {\em PAMI}, PP(99), 2014.

\bibitem{marti-2011}
Rafael Mart{\'i} and Gerhard Reinelt.
\newblock {\em The Linear Ordering Problem}.
\newblock Springer, 2011.

\bibitem{tromble-2009}
Roy Tromble and Jason Eisner.
\newblock Learning linear ordering problems for better translation.
\newblock In {\em Proceedings of the 2009 Conference on Empirical Methods in
  Natural Language Processing}, pages 1007--1016, Stroudsburg, PA, USA, 2009.
  Association for Computational Linguistics.

\bibitem{boros-2002}
Endre Boros and Peter~L. Hammer.
\newblock Pseudo-boolean optimization.
\newblock {\em Discrete Applied Mathematics}, 123(1--3):155--225, 2002.

\bibitem{pham2013fast}
Ninh Pham and Rasmus Pagh.
\newblock Fast and scalable polynomial kernels via explicit feature maps.
\newblock In {\em KDD}, pages 239--247, 2013.

\bibitem{fan-2008}
Rong-En Fan, Kai-Wei Chang, Cho-Jui Hsieh, Xiang-Rui Wang, and Chih-Jen Lin.
\newblock {LIBLINEAR}: A library for large linear classification.
\newblock {\em JMLR}, 9:1871--1874, 2008.

\bibitem{kernighan-1970}
Brian~W. Kernighan and Shen Lin.
\newblock An efficient heuristic procedure for partitioning graphs.
\newblock {\em Bell system technical journal}, 49(2):291--307, 1970.

\bibitem{lecun-1998}
Yann LeCun and Corinna Cortes.
\newblock The mnist database of handwritten digits, 1998.

\bibitem{wan-2013}
Li~Wan, Matthew Zeiler, Sixin Zhang, Yann LeCun, and Rob Fergus.
\newblock Regularization of neural networks using dropconnect.
\newblock In {\em ICML}, 2013.

\bibitem{meila-2003}
Marina Meil{\u{a}}.
\newblock Comparing clusterings by the variation of information.
\newblock In {\em Learning theory and kernel machines}, pages 173--187. 2003.

\bibitem{deza1998metrics}
Michael Deza and Tayuan Huang.
\newblock Metrics on permutations, a survey.
\newblock In {\em Journal of Combinatorics, Information and System Sciences},
  1998.

\end{thebibliography}

\end{document}